\newcommand{\bbQ}{\mathbb{Q}}
\newcommand{\expe}{\mathbb{E}}
\newtheorem{theorem}{Theorem}
\newtheorem{assumption}{Assumption}
\newtheorem{lemma}{Lemma}
\newtheorem{definition}{Definition}
\newcommand{\myred}[1]{\textcolor{red!70!black}{#1}}
\newcommand{\myblue}[1]{\textcolor{black!40!blue}{#1}}
\newcommand{\redQ}[2]{\myred{\bbQ_{#1}({#2})}}
\newcommand{\blueQ}[2]{\myblue{\bbQ_{#1}({#2})}}
\newcommand{\what}[1]{\widehat{#1}}
\newcommand\dfa{\myblue{\Delta_A^\mathrm{fwd}}}
\newcommand\dba{\myred{\Delta_A^\mathrm{bwd}}}
\newcommand\dfw{\myblue{\Delta_W^\mathrm{fwd}}}
\newcommand\dbw{\myred{\Delta_W^\mathrm{bwd}}}
\newcommand\dbda{\myred{\Delta_{\nabla A}^\mathrm{bwd}}}
\newcommand\efa{\myblue{\epsilon_A^\mathrm{fwd}}}
\newcommand\eba{\myred{\epsilon_A^\mathrm{bwd}}}
\newcommand\efw{\myblue{\epsilon_W^\mathrm{fwd}}}
\newcommand\ebw{\myred{\epsilon_W^\mathrm{bwd}}}
\newcommand\ebda{\myred{\epsilon_{\nabla A}^\mathrm{bwd}}}
\newcommand{\bw}{\boldsymbol{w}}
\definecolor{codegreen}{rgb}{0,0.6,0}
\definecolor{codegray}{rgb}{0.5,0.5,0.5}
\definecolor{codepurple}{rgb}{0.58,0,0.82}
\definecolor{backcolour}{rgb}{0.95,0.95,0.92}
\newcommand\woqat{\nabla_{\widehat{\bw}} L(\widehat{\bw}, x, y)}
\title{Training with Fewer Bits: Unlocking Edge LLMs Training with Stochastic Rounding}
\author{
  {Taowen Liu},
  {Marta Andronic},
  {Deniz Gündüz},
  {George A. Constantinides}
\\ 
  Department of Electrical and Electronic Engineering\\
  Imperial College London\\
  London, UK
\\ 
\texttt{\{{tony.liu20}, {marta.andronic18}, {d.gunduz}, {g.constantinides}\}@imperial.ac.uk}
}
\begin{document}

\maketitle

\begin{abstract}
LLM training is resource-intensive. Quantized training improves computational and memory efficiency but introduces quantization noise, which can hinder convergence and degrade model accuracy.  Stochastic Rounding (SR) has emerged as a theoretically attractive alternative to deterministic rounding, offering unbiased gradient estimates. However, its interaction with other training factors—especially batch size—remains underexplored. In this paper, we present a theoretical and empirical study of mini-batch stochastic gradient descent (SGD) with SR, showing that \textbf{increased batch sizes can compensate for reduced precision during backpropagation}. Furthermore, we show that quantizing weights and activations impacts gradient variance in distinct ways. Our experiments validate these theoretical insights.
\end{abstract}

\section{Introduction}

Training large language models (LLMs) demands significant computational and memory resources. Mixed-precision training, using lower-precision formats, offers a crucial path to efficiency~\cite{micikevicius2018mixed, das2018mixed, wu2018training, lee2023training}, enabling training on edge devices~\cite{kandala2024yourdata}. However, reduced precision introduces quantization noise, potentially hindering convergence. Stochastic rounding (SR)~\cite{higham2022stochastic} provides an attractive, unbiased alternative to deterministic rounding methods for managing this noise. As shown in Figure~\ref{fig:norm}, SR can maintain training stability and performance where round-to-nearest (RTN) fails, especially under aggressive quantization.

Despite SR's advantages, its practical interaction with mini-batch stochastic gradient descent (SGD), particularly the role of batch size in mitigating SR-induced noise, remains underexplored. Key questions persist: Can larger batch sizes—a common variance reduction tool—effectively counteract SR noise? How does this apply distinctly to quantizing shared model weights versus per-sample activations and gradients? And crucially, how does SR's variance impact SGD convergence guarantees? Addressing these is vital for establishing theoretically grounded guidelines for optimal precision, rounding, and batch size selection, especially to unlock aggressive quantization for training LLMs on edge devices.

Although prior work has explored mixed-precision training empirically~\cite{micikevicius2018mixed, blake2023unit, peng2023fp8lm} and theoretically analyzed aspects like gradient quantization for efficient communication~\citep{xia2021simple, xia2022influence, xia2023convergence, xia2023stochastic} or weight quantization~\cite{li2017training}, a rigorous understanding of low-precision arithmetic within the backpropagation process itself, combined with the specific variance characteristics introduced by SR in a mini-batch context, remains limited. \citet{chen2020statistical} analyzed the impact of quantization error but did not explicitly incorporate batch size as a variable interacting with different sources of SR noise. 

This paper addresses this gap through a theoretical and empirical analysis of SR within mini-batch SGD. We explicitly model and differentiate the statistical properties of noise from quantizing shared weights versus per-sample activations and gradients during backpropagation. Our core theoretical finding, supported by empirical validation, is that \textbf{increased batch sizes can effectively compensate for reduced precision during backpropagation by mitigating SR-induced variance from per-sample operations.} Specifically, we demonstrate that reducing precision in activation and gradient quantization can be offset by a quantifiable increase in batch size—for instance, a 1-bit reduction may be balanced by at most a fourfold batch increase to maintain convergence quality, with practical increases often being milder.

Our main contributions are:

\begin{itemize}
    \item We develop a comprehensive theoretical analysis that explicitly models the impact of mini-batch size on SGD convergence when employing SR for quantizing different components in the training pipeline. This framework distinguishes between noise originating from weight quantization and noise from per-sample activation and gradient quantization.
    \item Leveraging our framework, we prove that the variance introduced by stochastically rounding per-sample activations and their gradients during backpropagation (specifically, for computing weight gradients like $\widehat{A}_{in}^T \widehat{\nabla A}_{out}$ and input activation gradients like $\widehat{\nabla A}_{out} \widehat{W}^T$) decays inversely with the mini-batch size ($1/b$). This provides a theoretical underpinning for using larger batches to counteract reduced precision in these operations.
    \item We conduct experiments on both image classification and LLM fine-tuning to validate our theoretical predictions. These experiments demonstrate SR's superiority over RTN and confirm the predicted batch size scaling effect. From these results, we derive practical guidelines quantifying the trade-off, e.g., showing that a 1-bit precision reduction in activation/gradient quantization can be compensated by approximately doubling the batch size in practice to maintain similar convergence behavior.
\end{itemize}

\section{Related Works}
\label{related work section}

\subsection{Mixed-Precision Training}
Mixed-precision training has emerged as a crucial approach for efficient LLMs training by quantizing weights, activations, and gradients. Typically, general matrix-matrix multiplication (GEMM) operations are performed in mixed-precision during gradient computation. Early research demonstrated the viability of training with reduced precision using \texttt{FP16}~\cite{gupta2015deep, micikevicius2018mixed} and \texttt{INT16}~\cite{das2018mixed}. The field progressed with the development of \texttt{FP8}-based methods~\cite{wang2018training, banner2018scalable, yang2020training} and even 4-bit methods~\cite{sun2020ultra, chmiel2021logarithmic}. Various techniques have been proposed to address the challenges of mixed-precision training, including loss scaling~\cite{micikevicius2018mixed}, precision-aware parameter initialization~\cite{blake2023unit}, and blockwise quantization~\cite{peng2023fp8lm}. 

\subsection{Quantization-Aware Training}
Quantization-aware training (QAT) represents another significant direction, focusing on learning quantized weights during training~\cite{hubara2018quantized}. Researchers have explored diverse approaches, including uniform and non-uniform quantization methods~\cite{zhou2017incremental}, mixed-precision quantization~\cite{dong2019hawq}, and learned quantization strategies~\cite{esser2019learned}.

\begin{figure}[t]
	\centering
        \includegraphics[width=0.9\linewidth]{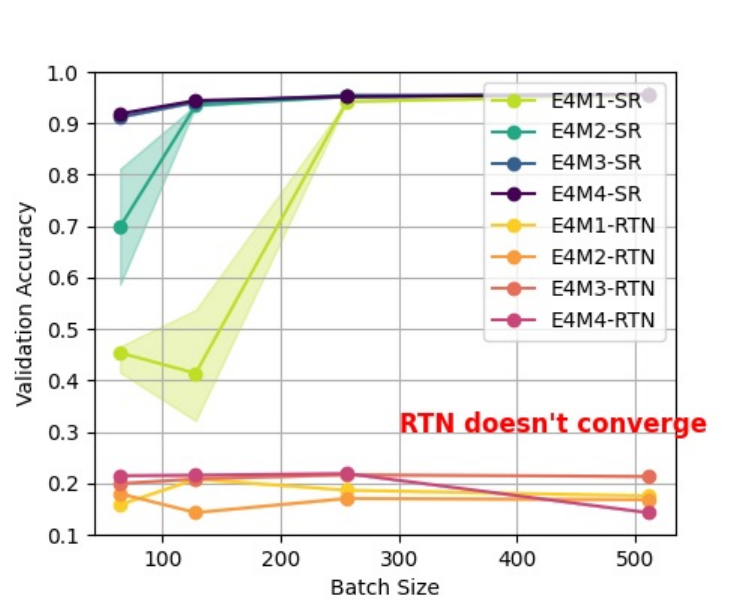}
	\caption{Stochastic rounding (SR) achieves higher accuracy with larger batch sizes, while round-to-nearest (RTN) fails to converge at the same precision. \protect\footnotemark}
        \vspace{-10pt}
	\label{fig:norm}
\end{figure}
\footnotetext{Validation accuracy of WideResNet-16 on CIFAR-10.}

\subsubsection{Stochastic Rounding}

Stochastic rounding (SR)~\cite{higham2022stochastic, xia2021simple, xia2022influence, xia2023convergence, xia2023stochastic} is a probabilistic technique used to mitigate the errors introduced by quantization, especially in low-precision arithmetic by providing unbiased estimates of quantized values. Most theoretical analyses of mixed-precision training have focused on gradient quantization for communication efficiency or storage efficiency~\cite{xia2021simple, xia2022influence, xia2023convergence, xia2023stochastic}. \citet{chen2020statistical} proposed a framework for analyzing the impact of gradient quantization on convergence properties. SR randomly rounds the value to one of the two nearest levels, with probabilities proportional to the distance of the value from each level. SR can be efficiently implemented in hardware using a pseudo-random number generator such as a linear feedback shift register (LFSR). SR has been adopted in modern hardware implementations such as Graphcore processors~\cite{graphcore}. To define SR formally, we first define the threshold quantization function, which can express both round-to-nearest and stochastic rounding.

\begin{definition}[\textbf{Threshold Quantization Function}]
Let $x$ be the number to be quantized, let $\Delta$ be the precision level, and let $\epsilon$ be the threshold. The threshold quantization function is defined as:
    \begin{equation}
        \begin{split}
            \mathbb{Q}_\Delta(x, \epsilon) = \Delta \cdot \begin{cases}
            \lfloor \frac{x}{\Delta} \rfloor,  \text{when } \frac{x}{\Delta} - \lfloor \frac{x}{\Delta} \rfloor < \epsilon \\ 
            \lfloor \frac{x}{\Delta} + 1 \rfloor,  \text{otherwise}
            \end{cases}
        \end{split}
    \end{equation}
    Note: Specifically, when $\Delta = 0$, for any $\epsilon$, define $\mathbb{Q}_0(x, \epsilon) = x$.
\end{definition}

\begin{definition}[\textbf{Round-to-nearest (RTN)}]

Given a real number $x$ and a quantization step size $\Delta$, the RTN quantization of $x$ is defined as: \begin{equation}\hat x = \mathbb{Q}_{\Delta}(x, \frac{1}{2}),\end{equation} where rounding is performed towards positive infinity in the event of a tie.
\end{definition}

\begin{definition}[\textbf{Stochastic Rounding (SR)}]
Given a real number $x$, a quantization step size $\Delta$, and a randomly sampled threshold $\epsilon \sim U[0, 1]$, the SR quantization of $x$ is defined as: \begin{equation}\hat x = \mathbb{Q}_{\Delta}(x, \epsilon).\end{equation} A matrix stochastic rounding is an element-wise stochastic rounding with the thresholds drawn independently for each element.
\end{definition}

SR provides an unbiased estimator of the true arithmetic result, though this accuracy comes at the cost of increased variance compared to deterministic methods. For example, consider summing the value $0.7$ exactly ten times using integer arithmetic. Deterministic RTN consistently rounds $0.7$ up to $1$, yielding a fixed sum of $10$. This deterministic approach introduces a bias of $+3$, as the exact sum should be $7$, but it exhibits zero variance. In contrast, SR probabilistically rounds $0.7$ either up to $1$ with a $70\%$ probability or down to $0$ with a $30\%$ probability. Consequently, repeated summations using SR yield varying outcomes such as $6$, $7$, or $8$. Crucially, the expected sum over many trials converges precisely to the unbiased value of $7$.

\subsubsection{Mixed-Precision Matrix Multiplication}

In LLMs training, the most arithmetic-intensive operation is matrix multiplication in linear or convolution layers.  In accelerators like GPUs, the mixed-precision multiplication performs the scalar multiplication operations in low precision and the accumulation in high precision. Mathematically, this is equivalent to full-precision matrix multiplication of two quantized matrices. $\hat{A} = \mathbb{Q}_{\Delta}(A, \epsilon_A), \hat B = \mathbb{Q}_{\Delta}(B, \epsilon_B)$. Therefore, $AB$ is approximated through $\hat A \hat B$.

\section{A General Framework for Mixed-Precision SGD}

In mixed-precision SGD, matrix multiplications are performed using lower precision. This involves quantizing the layer's input activations, its weight matrix, and the gradients backpropagated from the subsequent layer before they are used in the forward and backward pass matrix multiplications. The choice of quantization parameters (precision levels $\Delta$ and rounding thresholds $\epsilon$) for each of these affects training dynamics. 

\begin{figure*}[t]
\vspace{-1em}
\noindent \begin{minipage}[t]{0.48\textwidth}
\begin{algorithm}[H]
\caption{Mixed-Precision Forward Pass}
\label{algo:forward}
\begin{algorithmic}[1]
\footnotesize
\Function{F}{$A_{in}, W, \Delta_A^{\text{fwd}}, \epsilon_A^{\text{fwd}}, \Delta_W^{\text{fwd}}, \epsilon_W^{\text{fwd}}$}
    \State $\widehat{A}_{in}\gets \blueQ{\Delta_A^{\text{fwd}}}{A_{in}, \epsilon_A^{\text{fwd}}}$
    \State $\widehat{W}\gets  \blueQ{\Delta_W^{\text{fwd}}}{W, \epsilon_W^{\text{fwd}}}$
    \State $A_{out} \gets \widehat{A}_{in} \widehat{W}$ 
    \State \Return $A_{out}$
\EndFunction
\end{algorithmic}
\end{algorithm}
\vspace{-1.2em}
\footnotesize{\textit{Note: Forward and backward passes are deterministic for fixed $\boldsymbol{\epsilon}$. Stochasticity can arise if thresholds $\epsilon$ are sampled, otherwise round-to-nearest is applied.}}
\end{minipage}
\hfill
\begin{minipage}[t]{0.48\textwidth}
\begin{algorithm}[H]
\caption{Mixed-Precision Backward Pass}
\label{algo:back}
\begin{algorithmic}[1]
\footnotesize
\Function{B}{$A_{in}, W, \nabla A_{out}, \Delta_A^{\text{bwd}}, \epsilon_A^{\text{bwd}}, \dots$} %
    \State $\widehat{A}_{in}\gets \redQ{\Delta_A^{\text{bwd}}}{A_{in}, \epsilon_A^{\text{bwd}}}$
    \State $\widehat{W}\gets \redQ{\Delta_W^{\text{bwd}}}{W, \epsilon_W^{\text{bwd}}}$
    \State $\widehat{\nabla}_{A_{out}} \gets \redQ{\Delta_{\nabla A}^{\text{bwd}}}{\nabla A_{out}, \epsilon_{\nabla A}^{\text{bwd}}} $
    \State $\nabla A_{in} \gets \widehat{\nabla}_{A_{out}} \widehat{W}^T$ 
    \State $\nabla W \gets \widehat{A}_{in}^T \widehat{\nabla}_{A_{out}}$ 

    \State \Return $\nabla A_{in}, \nabla W$
\EndFunction
\end{algorithmic}
\end{algorithm}
\end{minipage}
\end{figure*}

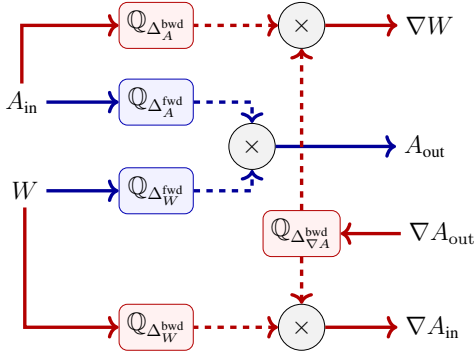
\begin{figure}[h]
	\centering
        \resizebox{0.85\columnwidth}{!}{\usetikzlibrary{positioning, calc, shapes.geometric, arrows.meta}

\begin{tikzpicture}[
  node distance=1.8cm and 1.6cm,
  box/.style={draw, rounded corners=4pt, minimum height=2em, minimum width=3.2em, font=\large, align=center, fill=black!10},
  redbox/.style={box, draw=red!70!black, fill=red!5},
  bluebox/.style={box, draw=blue!70!black, fill=blue!5},
  mult/.style={circle, draw=black, minimum size=2em, inner sep=0pt, font=\large, fill=gray!10},
  labelstyle/.style={font=\large},
]

\node[bluebox] (qin_a) at (0,0) {\(\mathbb{Q}_{\Delta_A^{\text{fwd}}}\)};
\node[bluebox, below=0.7cm of qin_a] (qin_w) {\(\mathbb{Q}_{\Delta_W^{\text{fwd}}}\)};
\node[mult, right=1.2cm of $(qin_a)!0.5!(qin_w)$] (mul) {\(\times\)};
\node[left=1.2cm of qin_a] (ain) {\(\text{\large \(A_{\text{in}}\)}\)};
\node[left=1.2cm of qin_w] (win)  {\(\text{\large \(W\)}\)};
\node[right=2cm of mul] (aout_t) {\(\text{\large \(A_{\text{out}}\)}\)};

\node[redbox, below=1.5cm of qin_w] (qout_w) {\(\mathbb{Q}_{\Delta_W^{\text{bwd}}}\)};
\node[redbox, above=0.5cm of qin_a] (qout_a) {\(\mathbb{Q}_{\Delta_A^{\text{bwd}}}\)};
\node[mult, right=1.4cm of qout_a] (dmul_w) {\(\times\)};
\node[mult, right=1.4cm of qout_w] (dmul_a) {\(\times\)};
\node[redbox, below=2.7cm of dmul_w] (qout) {\(\mathbb{Q}_{\Delta_{\nabla A}^{\text{bwd}}}\)};
\node[right=1cm of qout] (grad_A_out) {\large $\nabla A_{\mathrm{out}}$};

\node[right,  right=1.2cm of dmul_a] (dA_in) {\large \(\nabla A_{\text{in}}\)};
\node[right, right=1.2cm of dmul_w] (dW) {\(\text{\large \(\nabla W\)}\)};

\draw[->, dashed, ultra thick, black!40!blue] (qin_a) -| (mul);
\draw[->, dashed, ultra thick, black!40!blue] (qin_w) -|(mul);
\draw[->, ultra thick, black!40!blue] (mul) -- (aout_t);

\draw[<-, ultra thick, black!40!blue] (qin_a) -- (ain);
\draw[<-, ultra thick, black!40!blue] (qin_w) -- (win);

\draw[->, dashed,  ultra thick, red!70!black] (qout) -- (dmul_a);
\draw[->, ultra thick, red!70!black] (dmul_a) -- (dA_in);
\draw[<-, dashed, ultra thick, red!70!black] (dmul_a) -- (qout_w);
\draw[->, ultra thick, red!70!black] (grad_A_out) -- (qout);

\draw[->, dashed, ultra thick, red!70!black] (qout) -- (dmul_w);
\draw[->, ultra thick, red!70!black] (dmul_w) -- (dW);
\draw[<-, dashed, ultra thick, red!70!black] (dmul_w) -- (qout_a);

\draw[<-, ultra thick, red!70!black] (qout_a) -| (ain);
\draw[<-, ultra thick, red!70!black] (qout_w) -| (win);

\end{tikzpicture}}
	\caption{
             The forward pass is in blue, and the backward pass in red. Solid arrows represent data flow, while dashed arrows indicate the flow of quantized values.
        }
        \vspace{-10pt}
	\label{fig:mix}
\end{figure}

Our analysis focuses on the quantization within linear layers (including convolutional layers viewed as matrix multiplications), as they typically dominate the computational cost \cite{joshi2020essop}. Consider a network with $n$ linear layers, indexed $i=1, \dots, n$. The weights for linear layers are $\bw = {W_1, \dots, W_n}$. The input activation to layer $i$ is $A_{i-1}$, the weight is $W_i$, and the output activation is $A_i$. $x$ and $y$ are the network input and label, respectively. For simplicity, we exclude the bias. We denote $\nabla A_i$ as the gradient of the loss $L$ with respect to $A_i$. We also denote $f(\bw) = \expe_{x,y}[L(\bw, x, y)]$.

For each layer, $5$ quantization operations are involved. The exact quantization operation behavior is exactly defined through quantization thresholds $\epsilon$  and quantization precision $\Delta$. We define $\Delta_i = \{ \dfa, \dfw, \dba, \dbw, \dbda\}$ and $\epsilon_i = \{ \efa, \efw, \eba, \ebw, \ebda \}$. Let $\boldsymbol{\Delta} = \{ \Delta_i \}_{i=1}^n$ and $\boldsymbol{\epsilon} = \{ \epsilon_i \}_{i=1}^n$ represent the collection of all quantization parameters for the entire network. 

\begin{definition}[\textbf{Gradient Approximation}]
The gradient approximation $g(\bw, x, y, \boldsymbol{\Delta}, \boldsymbol{\epsilon})$ for a sample $(x, y)$ is computed via the forward and backward passes described in Algorithms~\ref{algo:forward},\ref{algo:back} and Figure~\ref{fig:mix}.
\end{definition}

\subsection{Mixed-Precision SGD with Deterministic Rounding}

One approach is to use round-to-nearest (RTN) rounding. This corresponds to setting all quantization thresholds $\boldsymbol{\epsilon}$ to 0.5. The approximate gradient is then computed as $g(\bw, x, y,\boldsymbol{\Delta},\boldsymbol{\epsilon}_{0.5})$, where $\boldsymbol{\Delta}$ contains the desired low-precision levels. A drawback of this approach is that deterministic rounding introduces systematic bias. Consequently, the resulting approximate gradient $g(\bw, x, y,\boldsymbol{\Delta},\boldsymbol{\epsilon}_{0.5})$ is usually a \textit{biased} estimator of the true gradient $\nabla_{\bw} L(\bw, x, y)$, unless extreme cases, for example, the intermediate values are exactly lying on the quantization points. This means that even when averaged over the data distribution, the expected approximate gradient may not equal the true expected gradient:
\begin{equation}
\mathbb{E}_{x,y}\left[ g(\bw, x, y,\boldsymbol{\Delta},\boldsymbol{\epsilon}_{0.5})\right]\neq\mathbb{E}_{x,y}\left[\nabla_{\bw} L(\bw, x, y)\right].
\label{eq:deterministic_bias}
\end{equation}

This lack of unbiasedness complicates theoretical convergence analysis, as many standard SGD proofs rely on the assumption of an unbiased gradient estimator.

\subsection{Weight-Only QAT}
\label{sec:standard_qat}

In a weight-only QAT setup~\cite{jacob2018quantization}, the weight is quantized using the same threshold and precision in both forward and backward passes.

In our framework $g(\bw, x, y, \boldsymbol{\Delta}, \boldsymbol{\epsilon})$, this strategy corresponds to the following settings: quantize weight $\widehat{\bw} = \widehat{\bw}^\mathrm{fwd} = \widehat{\bw}^\mathrm{bwd} = \bbQ_{\Delta \bw}(\bw, \epsilon_{\bw})$ and keep the activations/gradients in high precision $\dfa = \dba = \dbda = 0$. While the resulting solution may differ from that obtained with full-precision SGD, it is known that SGD for weight-only QAT can converge near a stationary point \cite{li2017training}. The quantization threshold can either be selected deterministically or stochastically.

Mathematically, the gradient approximation $g$ with this setup is equivalent to evaluating the loss with quantized weights, and then computing gradients with respect to the quantized weights. 
\begin{equation}
    g(\bw, x, y, \boldsymbol{\Delta}, \boldsymbol{\epsilon}) = \nabla_{\what \bw_t}L(\what \bw_t, x, y)\label{eq:qat_update}
\end{equation}

It is also worth noting that only weights are quantized; the potential for hardware acceleration from mixed-precision units is limited. This motivates our exploration of stochastic rounding for other components in the following section.

\subsection{Stochastic Rounding Mixed-Precision SGD for QAT Objectives}
\label{sec:sr_mp_sgd_qat}

Our goal is to compute an estimate $g(\bw, x, y, \boldsymbol{\Delta}, \boldsymbol{\epsilon})$ of the QAT gradient $\woqat$ such that the estimate is unbiased and performing computation in low precision. SR can be employed in the backward pass~\cite{chen2020statistical}. 

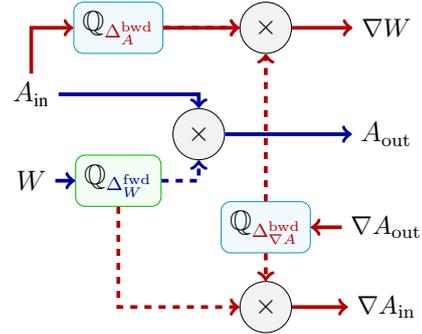
\begin{figure}[h]
	\centering
        \resizebox{0.75\columnwidth}{!}{\usetikzlibrary{positioning, calc, shapes.geometric, arrows.meta}

\begin{tikzpicture}[
  node distance=1.8cm and 1.6cm,
  box/.style={draw, rounded corners=4pt, minimum height=2em, minimum width=3.2em, font=\large, align=center, fill=black!10},
  redbox/.style={box, draw=cyan!70!black, fill=cyan!5},
  bluebox/.style={box, draw=green!70!black, fill=green!5},
  mult/.style={circle, draw=black, minimum size=2em, inner sep=0pt, font=\large, fill=gray!10},
  labelstyle/.style={font=\large},
]

\node (ain) at (0,0) {\(\text{\large \(A_{\text{in}}\)}\)};
\node (w) at (0, -1.3) {\large $W$};
\node[bluebox] (qw) at (1.3, -1.3) {$\bbQ _{\dfw}$};
\node[mult] (mulaout) at (2.5, -0.6) {$\times$};
\node (aout) at (5.3, -0.6) {\large $A_{\mathrm{out}}$};
\node[mult] (muldw) at (3.5, 1) {$\times$};
\node[mult] (muldain) at (3.5, -3.2) {$\times$};
\node (daout) at (5.3, -2) {$\nabla A_{\mathrm{out}}$};
\node (dain) at (5.3, -3.2) {$\nabla A_{\mathrm{in}}$};
\node (dw) at (5.3, 1) {$\nabla W$};
\node[redbox] (qdaout) at (3.5, -2) {$\bbQ_{\dbda}$};
\node[redbox] (qainb) at (1.3,1) {$\bbQ_{\dba}$};

\draw[->, ultra thick, black!40!blue] (w) -- (qw);
\draw[->, dashed, ultra thick, black!40!blue] (qw) -| (mulaout);
\draw[->, ultra thick, black!40!blue] (ain) -| (mulaout);
\draw[->, ultra thick, black!40!blue] (mulaout) -- (aout);

\draw[->, ultra thick, red!70!black] (ain) |- (qainb);
\draw[->, dashed, ultra thick, red!70!black] (qainb) -- (muldw);
\draw[->, ultra thick, red!70!black] (qainb) -- (muldw);
\draw[->, ultra thick, red!70!black] (muldw) -- (dw);

\draw[->, dashed, ultra thick, red!70!black] (qw) |- (muldain);
\draw[->, ultra thick, red!70!black] (muldain) -- (dain);

\draw[->, ultra thick, red!70!black] (daout) -- (qdaout);
\draw[->, dashed, ultra thick, red!70!black] (qdaout) -- (muldain);
\draw[->, dashed, ultra thick, red!70!black] (qdaout) -- (muldw);

\end{tikzpicture}}
	\caption{
             Stochastic Rounding Mixed-Precision SGD for QAT Objectives: Weight quantization is shared across the forward and backward passes. Activation is kept with high precision in the forward pass.
        }       
        \label{img:srmpsgd}
        \vspace{-10pt}
\end{figure}

To achieve this, weights in both forward and backward pass, as well as activations/gradients in backward pass, are quantized in low precision $\dfw = \dbw = \dba = \dbda = \Delta$. Forward pass activation remains high precision to ensure convergence $\dfa = 0$. Activations/gradients are stochastically quantized; $\eba$ and $\ebda$ are sampled from $U[0,1]$. Weight quantization threshold $\epsilon_{\bw}$ can be either stochastic or deterministic, which is discussed later in section \ref{section:bias-from-weight}. Denote $(\boldsymbol{\Delta}^\ast, \boldsymbol{\epsilon}^\ast)$ to be the quantization parameters. This approach is captured by figure \ref{img:srmpsgd}. Under these conditions, the resulting approximate gradient $g(\bw, x, y, \boldsymbol{\Delta}^\ast, \boldsymbol{\epsilon}^\ast)$ is an unbiased estimator of $\woqat$:

\vspace{-15pt}

\begin{equation}
\mathbb{E}_{\eba, \ebda}\left[ g(\bw, x, y, \boldsymbol{\Delta}^\ast, \boldsymbol{\epsilon}^\ast)\right] = \woqat.
\label{eq:sr_unbiased_qat_grad_single_sample}
\end{equation}
Consequently, when taking expectation over the data distribution, the expected approximate gradient equals the expected QAT gradient:
\begin{equation}
\begin{split}
& \mathbb{E}_{x,y,\eba,\ebda}\left[ g(\bw, x, y, \boldsymbol{\Delta}^\ast, \boldsymbol{\epsilon}^\ast)\right] \\ 
= & \mathbb{E}_{x,y}\left[\woqat\right].
\end{split}
\label{eq:sr_unbiased_qat_grad_full_expectation}
\end{equation}

For mini-batch mixed-precision SGD, it corresponds to randomly sampling $\eba$ and $\ebda$ for each sample $j$. However, the same copy of weight is used across a mini-batch, whether it is stochastically or deterministically quantized. We introduce the following simplified notation:

\begin{definition}[\textbf{Stochastic Rounding Mini-batch Mixed-precision SGD}]
With a small abuse of notation, define a simplified notation for Stochastic Rounding Mini-batch Mixed-precision SGD
\begin{equation}
\begin{split}
    \quad \tilde G(\bw) = \frac{1}{b} \sum_{j=1}^b g(& \bw, x^j, y^j, \boldsymbol{\Delta}^\ast, \\ 
    & \{ \efw, \ebw, \eba^{\boldsymbol{j}}, \ebda^{\boldsymbol{j}} \}).
\end{split}
\end{equation}
\end{definition}
To emphasize, $\efw$ and $\ebw$ do not have a superscript $j$, whereas $\eba^{\boldsymbol{j}}$ and $ \ebda^{\boldsymbol{j}}$ have superscript $j$. For each mini-batch, the estimator $\tilde{G}(\bw)$ remains an unbiased estimator of the mini-batch weight only QAT gradient:
\begin{equation}
\label{eq:expected_grad_estimator}
\begin{split}
       & \expe_{(x,y) \in \mathrm{batch},\eba,\ebda}[\tilde {G} (\bw_t)] \\ 
    =  & \mathbb{E}_{(x,y)}[ \nabla_{\what{\bw}_t} L(\what{\bw}_t, x, y)] = \nabla_{\what \bw}f(\what \bw).
\end{split}
\end{equation}

\section{Convergence Analysis}
\label{sec:convergence_analysis}

This section presents a theoretical analysis of the convergence properties of mini-batch mixed-precision SGD in a non-convex setting. We aim to understand how weight quantization and activation/gradient quantization interact with the mini-batch size and hence affect convergence.

The SGD update rule is given by:
\begin{equation}
    \bw_{t+1} = \bw_t - \eta_t \tilde{G} (\bw_t),
\end{equation}
where $\eta_t$ is the learning rate at iteration $t$, and $\tilde{G}(\bw_t)$ is the mini-batch gradient estimator computed using low-precision arithmetic.

One aspect of our analysis is the nature of this estimator. We denote $\expe_t[\cdot] = \expe[\cdot | \bw_t]$ as the expectation conditional on $\bw_t$. As stated in Equation~\ref{eq:expected_grad_estimator}, $\expe_t[\tilde{G}(\bw_t)] = \nabla_{\what{\bw}_t} f(\what{\bw}_t)$, meaning $\tilde{G}(\bw_t)$ is an unbiased gradient estimator of the QAT objective $f(\what{\bw})$. However, since $\nabla_{\what \bw_t}f(\what \bw_t)$ may differ from the true gradient $\nabla f(\bw_t)$, our SGD algorithm operates with a potentially biased estimate of the gradient of our ultimate objective.

Another crucial aspect of our analysis involves quantifying the total variance of this estimator $\tilde{G}(\bw_t)$ and understanding how its components behave, particularly in relation to the mini-batch size $b$. This variance arises from two primary sources: sampling variance as well as variance introduced by stochastic rounding. Our analysis assumes that quantization errors are i.i.d. per-sample. This is a reasonable assumption in our framework, as stochastic rounding is performed independently on each number before multiplication, and accumulation is performed in high precision. A key insight is that SR variance, similar to sampling variance, diminishes as the mini-batch size $b$ increases. Intuitively, because SR is applied independently to each of the $b$ samples' activations/gradients, the errors introduced by these quantization steps tend to average out across the mini-batch. Lemma~\ref{lem:quant_error_scaling} provides a concrete derivation of this $1/b$ scaling.

The interplay between the bias and the two sources of variance (sampling and per-sample SR, both influenced by batch size) is the core of our convergence analysis. We operate on assumptions:

\begin{assumption}[\textbf{Smoothness and Boundedness}]
\label{assum:smoothness}
The true loss function $L(\bw, x, y)$ is $\mathcal{L}$-smooth, meaning its gradient is $\mathcal{L}$-Lipschitz continuous:
$\|\nabla L(\bw, x, y) - \nabla L(\mathbf{v}, x, y)\| \le \mathcal{L} \|\bw - \mathbf{v}\|$ for all $\bw, \mathbf{v}$.
Furthermore, $L(\bw, x, y)$ is bounded below by $L_{\min}$, i.e., $L(\bw, x, y) \ge L_{\min}$.
\end{assumption}

\subsection{Bias from weight quantization}
\label{section:bias-from-weight}

The use of quantized weights $\what{\bw}$ instead of full-precision weights $\bw$ when defining the target gradient introduces a systematic bias. The following lemma bounds this bias.

\begin{lemma}[\textbf{Bounded Gradient Bias from Weight Quantization}]
\label{lemma:bias_BW}
Let $\what \bw=\bbQ_{\Delta W}(\bw,\epsilon_W)$ with quantization step $\Delta W$. The difference between the QAT gradient $\nabla_{\what{\bw}} L(\what{\bw}, x, y)$ and the true gradient $\nabla L(\bw, x, y)$ is uniformly bounded:
\begin{equation}
    \|\nabla_{\what{\bw}} L(\what{\bw}, x, y) - \nabla L(\bw, x, y)\| \le B_W,
\end{equation}
where $B_W =\frac{1}{2} \mathcal{L} \sqrt{d} \Delta_W$ with RTN and $B_W = \mathcal{L} \sqrt{d} \Delta_W$ with SR.
\end{lemma}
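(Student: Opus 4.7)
The plan is to reduce the gradient-bias bound to a parameter-perturbation bound via $\mathcal{L}$-smoothness, then control the weight perturbation $\|\what{\bw}-\bw\|$ via an elementwise analysis of the quantization error. Note first that $\nabla_{\what{\bw}} L(\what{\bw}, x, y)$ is just $\nabla L(\cdot, x, y)$ evaluated at the point $\what{\bw}$, so by Assumption~\ref{assum:smoothness},
\[
\|\nabla_{\what{\bw}} L(\what{\bw}, x, y) - \nabla L(\bw, x, y)\| \le \mathcal{L} \, \|\what{\bw}-\bw\|.
\]

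Next I would bound $\|\what{\bw}-\bw\|$ coordinatewise. By the definition of the threshold quantization function $\mathbb{Q}_{\Delta_W}$, for every coordinate $k$ the output $(\what{\bw})_k$ is one of the two nearest grid points to $(\bw)_k$. Under RTN ($\epsilon=\tfrac{1}{2}$), the coordinate error is at most $\Delta_W/2$, so $\|\what{\bw}-\bw\|_\infty \le \Delta_W/2$ and hence $\|\what{\bw}-\bw\|_2 \le \tfrac{\sqrt{d}}{2}\Delta_W$, where $d$ is the dimensionality of $\bw$. Under SR, the worst case is that the rounding is to the farther of the two adjacent grid points (distance less than $\Delta_W$), so $\|\what{\bw}-\bw\|_\infty \le \Delta_W$ and $\|\what{\bw}-\bw\|_2 \le \sqrt{d}\,\Delta_W$.

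Combining the two steps yields
\[
\|\nabla_{\what{\bw}} L(\what{\bw},x,y) - \nabla L(\bw,x,y)\| \le \mathcal{L}\sqrt{d}\,\Delta_W \cdot c,
\]
with $c=\tfrac{1}{2}$ for RTN and $c=1$ for SR, matching the claimed $B_W$.

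There is no serious obstacle here; the only subtlety is notational, namely recognising that the ``QAT gradient'' $\nabla_{\what{\bw}}L(\what{\bw},x,y)$ is simply the Lipschitz-continuous map $\nabla L(\cdot,x,y)$ evaluated at the perturbed point $\what{\bw}$, rather than a gradient through the quantizer. Once this is settled, the worst-case $\ell_\infty$ bound on the per-coordinate rounding error (which is the same uniformly over choices of threshold $\epsilon \in [0,1]$ for SR) converts directly into the stated $\ell_2$ bound via $\|\cdot\|_2 \le \sqrt{d}\,\|\cdot\|_\infty$, and smoothness finishes the argument.
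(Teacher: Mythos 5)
Your proposal is correct and follows essentially the same route as the paper's proof in Appendix~\ref{proof:lemma:bias_BW}: apply $\mathcal{L}$-smoothness to reduce the gradient bias to $\mathcal{L}\|\what{\bw}-\bw\|$, then bound the per-coordinate rounding error by $\Delta_W/2$ for RTN and $\Delta_W$ for SR and convert to an $\ell_2$ bound via the $\sqrt{d}$ factor. Your explicit remark that the QAT gradient is just $\nabla L(\cdot,x,y)$ evaluated at $\what{\bw}$ is a helpful clarification the paper leaves implicit, but the argument is otherwise identical.
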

\begin{proof}
The proof relies on the $L$-smoothness of $f(\bw)$ and the bound on element-wise quantization error, $\|\what{\bw} - \bw\| \propto \sqrt{d} \Delta_W$. A detailed derivation is provided in Appendix~\ref{proof:lemma:bias_BW}.
\end{proof}

Lemma~\ref{lemma:bias_BW} establishes that the gradient bias $B_W$ is proportional to the weight quantization precision $\Delta_W$. This bias term $B_W$ will contribute to an error floor in our final convergence bound that is \textbf{not reducible by increasing the mini-batch size $b$}.

\subsection{Variance Reduction via Mini-Batching for Per-Sample Quantization}
\label{subsec:variance_reduction_activation_grad}

In addition to the bias, the gradient estimator $\tilde{G}(\mathbf{w}_t)$ is subject to variance. This variance stems from both the stochastic sampling of data and the stochastic rounding. In this section, we focus on the computation $\nabla W = A^T A^{\mathrm{out}}$ since this would provide insight about why the variance of gradient estimator would decay by $\frac{1}{b}$, whereas the variance of $\nabla A = A^{\mathrm{out}} W^T$ is independent from batch size.

Consider the computation of a gradient component $\nabla W_{ij}$ for a weight matrix. Let $A \in \mathbb{R}^{D \times h_1}$ and $A^{\mathrm{out}} \in \mathbb{R}^{D \times h_2}$ be the full-dataset activations and upstream gradients, respectively. The $(i,j)$-th component of the true full-batch gradient is:
\begin{equation}
\nabla W_{ij}^{\text{(full)}} \;=\; \frac{1}{D}\sum_{k=1}^{D} A_{ki} A^{\mathrm{out}}_{kj}.
\end{equation}

For a mini-batch of $b$ samples, the corresponding component of the quantized gradient estimate is:
\begin{equation}
\begin{split}
    & \widehat{\nabla} W_{ij}^{\text{(quant-mini)}} \\ 
   =\; & \frac{1}{b}\sum_{k\in batch} \mathbb{Q}_{\Delta_A}(A_{ki}, \epsilon_{A_{ki}}) \, \mathbb{Q}_{\Delta_{A^{\mathrm{out}}}}(A^{\mathrm{out}}_{kj}, \epsilon_{A^{\mathrm{out}}_{kj}}),
\end{split}
\end{equation}
where $\epsilon_{A_{ki}}$ and $\epsilon_{A^{\mathrm{out}}_{kj}}$ are per-sample random thresholds for stochastic rounding. 

The following lemma characterizes the Mean Squared Error (MSE) of this quantized mini-batch estimate.

\begin{lemma}[\textbf{Error Decomposition for Fully Quantized Gradient Component}]
\label{lem:q_mult_error_fully_quantized}
Under assumption of stochastic rounding, the MSE of $\widehat{\nabla} W_{ij}^{\text{(quant-mini)}}$ with respect to $\nabla W_{ij}^{\text{(full)}}$ is:
\begin{equation}
\begin{split}
       & \mathbb{E}\!\left[\left(\nabla W_{ij}^{\text{(full)}} - \widehat{\nabla} W_{ij}^{\text{(quant-mini)}}\right)^2\right] =  T^{\text{s}}_{ij} + T^{\bbQ}_{ij},
\end{split}
\end{equation}
where 

\begin{equation}
\resizebox{0.98\linewidth}{!}{$
\begin{aligned}
T^{\mathrm{s}}_{ij} &= \mathbb{E}\Big[(\nabla W_{ij}^{\text{(full)}} - \tfrac{1}{b}\!\sum_{k \in \text{batch}} A_{ki}A^{\mathrm{out}}_{kj})^2\Big] \\
T^{\mathbb{Q}}_{ij} &= \mathbb{E}\Big[\tfrac{1}{b}\!\sum_{k\in \text{batch}} \big( A_{ki}\nabla A^{\mathrm{out}}_{kj} - \mathbb{Q}(A_{ki})\, \mathbb{Q}(\nabla A^{\mathrm{out}}_{kj}) \big)^2 \Big].
\end{aligned}
$}
\end{equation}

\end{lemma}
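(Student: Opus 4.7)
My plan is a straightforward bias--variance decomposition of the MSE around the exact (unquantized) mini-batch mean. Let $S_{ij} = \frac{1}{b}\sum_{k\in\mathrm{batch}} A_{ki} A^{\mathrm{out}}_{kj}$ denote the mini-batch estimator computed with exact products, and let $X_k = A_{ki} A^{\mathrm{out}}_{kj} - \mathbb{Q}(A_{ki})\mathbb{Q}(A^{\mathrm{out}}_{kj})$ be the per-sample quantization residual. I would begin by inserting $\pm S_{ij}$ inside the MSE and writing
\begin{equation*}
\nabla W_{ij}^{\text{(full)}} - \widehat{\nabla} W_{ij}^{\text{(quant-mini)}} \;=\; \bigl(\nabla W_{ij}^{\text{(full)}} - S_{ij}\bigr) \;+\; \tfrac{1}{b}\sum_{k\in\mathrm{batch}} X_k.
\end{equation*}
Squaring and taking expectation produces three pieces: the sampling variance term $T^{\mathrm{s}}_{ij}$ (by definition), the quantization contribution $\expe\bigl[\bigl(\tfrac{1}{b}\sum_k X_k\bigr)^2\bigr]$, and a cross term.

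The central step is to show the cross term vanishes. Conditional on the sampled mini-batch $\mathcal{B}$, the random SR thresholds $\epsilon_{A_{ki}}, \epsilon_{A^{\mathrm{out}}_{kj}}$ are independent of the data, and since the threshold for each factor is drawn independently, unbiasedness of SR yields $\expe[\mathbb{Q}(A_{ki})\mathbb{Q}(A^{\mathrm{out}}_{kj}) \mid \mathcal{B}] = A_{ki} A^{\mathrm{out}}_{kj}$, i.e.\ $\expe[X_k \mid \mathcal{B}] = 0$. By the tower property, $\expe\bigl[(\nabla W_{ij}^{\text{(full)}} - S_{ij})\cdot \tfrac{1}{b}\sum_k X_k\bigr] = 0$, since the first factor is $\mathcal{B}$-measurable and the conditional mean of the second is zero.

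For the quantization term, I would exploit that the per-sample SR thresholds are drawn independently across $k$. Conditional on $\mathcal{B}$, the $X_k$ are then mutually independent with zero mean, so the off-diagonal terms in the expansion of $(\sum_k X_k)^2$ drop out and one is left with
\begin{equation*}
\expe\!\left[\bigl(\tfrac{1}{b}\sum_{k} X_k\bigr)^2 \,\bigg|\, \mathcal{B}\right] \;=\; \tfrac{1}{b^2}\sum_{k\in\mathrm{batch}} \expe[X_k^2 \mid \mathcal{B}].
\end{equation*}
Taking outer expectation and rewriting as a sample average over the batch recovers the quantity $T^{\mathbb{Q}}_{ij}$ in the stated form (with the paper's normalization convention), completing the decomposition.

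The main obstacle is making the two independence claims fully precise: (i) that the randomness from SR is independent of data sampling, which is needed to carry out the conditional-expectation argument that kills the cross term; and (ii) that the SR thresholds are independent across samples \emph{and} across the two factors inside a sample, which is needed both for $\expe[\mathbb{Q}(A_{ki})\mathbb{Q}(A^{\mathrm{out}}_{kj})]$ to factor into a product of means and for the $X_k$ to be uncorrelated across $k$. Both are supplied by the i.i.d.\ per-sample SR assumption articulated in the preamble of Section~\ref{sec:convergence_analysis}; the remainder of the proof is routine algebra.
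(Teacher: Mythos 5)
Your decomposition into the exact mini-batch mean plus the per-sample quantization residuals, with the cross term killed by conditioning on the batch and using the unbiasedness and independence of the two SR thresholds, is exactly the paper's argument in Appendix~\ref{app:proof_lemma_error_decomp}. The only difference is that your final step---expanding $\bigl(\tfrac{1}{b}\sum_k X_k\bigr)^2$ via independence across $k$ to get $\tfrac{1}{b^2}\sum_k \expe[X_k^2\mid\mathcal{B}]$---is not needed for this lemma (the paper simply defines $T^{\bbQ}_{ij}=\expe[(Q')^2]$ and stops); that expansion is precisely the content of the paper's proof of Lemma~\ref{lem:quant_error_scaling}, and your remark about the ``normalization convention'' correctly flags that the lemma statement's square-inside-the-sum form is only consistent with the appendix's $\expe[(Q')^2]$ up to that $1/b$ factor.
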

The decomposition holds because the cross-term is zero due to the unbiased nature of stochastic rounding when conditioned on the mini-batch data. (See Appendix~\ref{app:proof_lemma_error_decomp} for details).
Standard SGD tells us that the sampling error $T^{\text{s}}_{ij}$ can be reduced by increasing the batch size $b$. We now show that the quantization error $T^{\bbQ}_{ij}$ also benefits from larger batch sizes.

\begin{lemma}[\textbf{Quantization Error under Per-Sample SR Scaling and Batch Size}]
\label{lem:quant_error_scaling}
Let $T^{\bbQ}_{ij}$ be the quantization error term defined in Lemma~\ref{lem:q_mult_error_fully_quantized}. 
Assuming i.i.d.\ samples within each mini-batch and independent stochastic rounding for every element, 
\begin{equation} \label{eq:T_quant_A_Aout_result}
\resizebox{0.98\linewidth}{!}{$
\begin{split}
 T^{\bbQ}_{ij}   & \leq \; \frac{1}{b} \left( \mathbb{E}[A_{ki}^2]\sigma_{A^{\mathrm{out}}}^2 + \mathbb{E}[(A^{\mathrm{out}}_{kj})^2]\sigma_{A}^2 + \sigma_{A}^2\sigma_{A^{\mathrm{out}}}^2 \right) \\ 
 & \propto \frac{C}{b} \cdot 2^{-2 B}.
\end{split}
$}
\end{equation}

\end{lemma}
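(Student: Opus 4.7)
The plan is to expand each single-sample multiplicative error into three terms involving per-element SR perturbations, use the unbiasedness and conditional independence of stochastic rounding to kill cross-terms, and then exploit independence across the $b$ mini-batch samples to convert a squared mean into a mean of squares, which is where the $1/b$ factor is born.

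First I would introduce the per-element SR perturbations
\begin{equation*}
\delta_{A,k} := \bbQ_{\Delta_A}(A_{ki},\epsilon_{A,k})-A_{ki}, \qquad \delta_{G,k} := \bbQ_{\Delta_{A^{\mathrm{out}}}}(A^{\mathrm{out}}_{kj},\epsilon_{G,k})-A^{\mathrm{out}}_{kj},
\end{equation*}
so that the per-sample error admits the exact decomposition
\begin{equation*}
\Xi_k := A_{ki}A^{\mathrm{out}}_{kj} - \bbQ(A_{ki})\bbQ(A^{\mathrm{out}}_{kj}) = -\bigl(A_{ki}\delta_{G,k}+A^{\mathrm{out}}_{kj}\delta_{A,k}+\delta_{A,k}\delta_{G,k}\bigr).
\end{equation*}
By the definition of SR, $\expe[\delta_{A,k}\mid A_{ki}]=\expe[\delta_{G,k}\mid A^{\mathrm{out}}_{kj}]=0$, and because the two rounding thresholds are drawn independently, $\delta_{A,k}$ and $\delta_{G,k}$ are conditionally independent given the data.

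Second, I would condition on the batch data and square the three-term decomposition. Every cross-product contains an odd power of $\delta_{A,k}$ or $\delta_{G,k}$, so all three vanish after conditional independence and unbiasedness, leaving
\begin{equation*}
\expe[\Xi_k^{2}\mid A_{ki},A^{\mathrm{out}}_{kj}] = A_{ki}^{2}\,s_G^{2}(A^{\mathrm{out}}_{kj}) + (A^{\mathrm{out}}_{kj})^{2}\,s_A^{2}(A_{ki}) + s_A^{2}(A_{ki})\,s_G^{2}(A^{\mathrm{out}}_{kj}),
\end{equation*}
where $s_A^{2}(\cdot),s_G^{2}(\cdot)$ are the conditional SR variances. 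Taking the outer expectation over the data and upper-bounding the conditional variances uniformly by $\sigma_A^{2}$ and $\sigma_{A^{\mathrm{out}}}^{2}$ (using the standard SR bound $\sigma^{2}\le\Delta^{2}/4\propto 2^{-2B}$) produces $\expe[\Xi_k^{2}]\le\expe[A_{ki}^{2}]\sigma_{A^{\mathrm{out}}}^{2}+\expe[(A^{\mathrm{out}}_{kj})^{2}]\sigma_A^{2}+\sigma_A^{2}\sigma_{A^{\mathrm{out}}}^{2}$.

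Third, because the $b$ samples are i.i.d.\ and the SR thresholds are independent across $k$, the $\{\Xi_k\}$ are i.i.d.\ with $\expe[\Xi_k]=0$. Identifying $T^{\bbQ}_{ij}$ with the variance of the batch mean,
\begin{equation*}
T^{\bbQ}_{ij} = \expe\!\Bigl[\Bigl(\tfrac{1}{b}\sum_{k\in\text{batch}}\Xi_k\Bigr)^{\!2}\Bigr] = \frac{1}{b^{2}}\sum_{k}\expe[\Xi_k^{2}] = \frac{1}{b}\,\expe[\Xi_1^{2}],
\end{equation*}
the inter-sample cross-terms vanish by independence and mean-zero, and substituting the Step~2 bound gives the stated inequality, with the final $C/b\cdot 2^{-2B}$ scaling coming from $\sigma^{2}=\Theta(2^{-2B})$.

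The main obstacle will be the careful bookkeeping of the three sources of randomness (mini-batch sampling, SR of activations, SR of upstream gradients): I need $\expe[\Xi_k]=0$ to hold unconditionally so that inter-sample cross-terms drop in Step~3, and simultaneously need the conditional independence of $\delta_{A,k},\delta_{G,k}$ to eliminate the intra-sample cross-terms inside Step~2. Once the order of conditioning is fixed (SR given data, then data), the rest is algebra plus the standard SR variance bound.
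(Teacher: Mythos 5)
Your proposal is correct and follows essentially the same route as the paper's proof in Appendix~C: the same three-term decomposition $XY-\widehat X\widehat Y = -(Xe_Y+Ye_X+e_Xe_Y)$ (up to sign), elimination of cross-terms via unbiasedness and conditional independence of the SR perturbations, and the $1/b$ factor from the i.i.d., zero-mean property of the per-sample errors. The only cosmetic difference is that you invoke the sharper SR variance bound $\sigma^2\le\Delta^2/4$ where the paper uses the cruder $\sigma^2\le\Delta^2$; both yield the same $2^{-2B}$ scaling.
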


\noindent
Here $\sigma_{A}^2$ and $\sigma_{A^{\mathrm{out}}}^2$ denote the quantization error variances of $A$ and $A^{\mathrm{out}}$. 
For step size $\Delta_X$, we have $\sigma_X^2 \leq \Delta_X^2$. 
With $B$ mantissa bits, $\Delta \propto 2^{-B}$, leading to $\sigma_X^2 \propto 2^{-2B}$. 
The constant $C$ depends on the second moments of $A$ and $A^{\mathrm{out}}$. 
A detailed proof is provided in Appendix~\ref{app:proof_lemma_quant_error_scaling}.

Lemma \ref{lem:quant_error_scaling} demonstrates that the SR variance component $T^{\bbQ}_{ij}$ \textbf{decays inversely with the mini-batch size $b$}. This insight is crucial for understanding the trade-off depicted in Figure~\ref{fig:bounds}. Specifically, if reducing precision by 1 bit causes the $2^{-2B}$ factor to increase by $4$ times, this increase can be counteracted by increasing the batch size $b$ by a factor of $4$ to maintain the same level of $T^{\bbQ}_{ij}$. This provides a theoretical basis for the empirical observation that larger batch sizes can compensate for reduced precision when using SR for activations and gradients.

\begin{figure}[t]
    \vspace{-\intextsep} %
    \centering
    \includegraphics[width=0.95\linewidth]{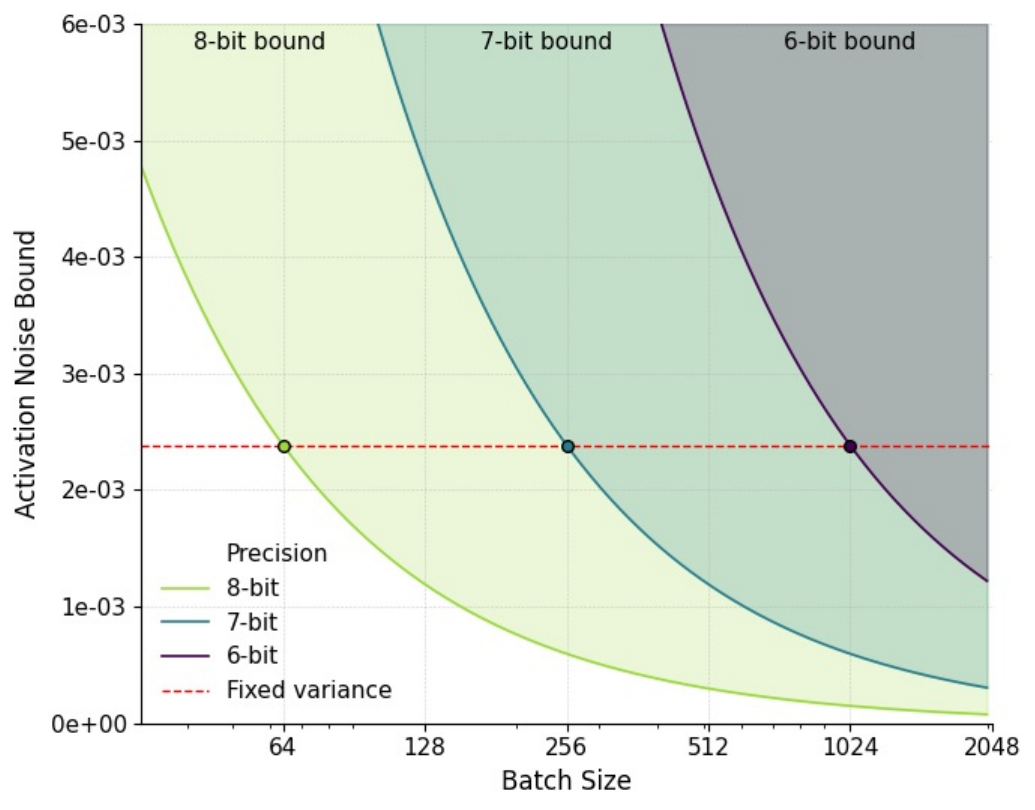} %
    \caption{To guarantee the same variance, increase batch size by at most $4\times$ when reducing $1$ bit of precision.}
    \label{fig:bounds}
    \vspace{-\intextsep} %
\end{figure}

\subsection{Convergence Theorem}
\label{subsec:conv_theorem}

\begin{assumption}[\textbf{Bounded Gradient Variance Components}]
\label{assum:variance_components}
\mbox{}\\

Let $\widehat{G}(\bw_t) = \expe_{(x,y)}[\nabla_{\what{\bw}_t} L(\what{\bw}_t, x, y)]$ be the expected QAT gradient. We assume the following for the stochastic gradient estimator $\tilde{G}(\bw_t)$:

\textbf{Sampling Variance:} The variance of the true QAT gradients is bounded by $\sigma_S^2$:
    
\begin{align}
\expe_{(x,y)} \| \nabla_{\what{\bw}_t} L(\what{\bw}_t, x, y) - \widehat{G}(\bw_t) \|^2 \le \sigma_S^2.
\end{align}
    
\textbf{Quantization Noise Variance:} The expected variance from stochastic rounding for a single sample, when estimating $\nabla_{\what{\bw}_t} L(\what{\bw}_t, x, y)$ with $g(\bw_t, x, y, \boldsymbol{\Delta}^\ast, \boldsymbol{\epsilon}^\ast)$, is bounded by $\sigma_Q^2$:

\begin{align}
\mathbb{E}_{x,y,\boldsymbol{\epsilon}^\ast} & \Bigl[
   \| g(\bw_t, x, y, \boldsymbol{\Delta}^\ast, \boldsymbol{\epsilon}^\ast)
     - \nabla_{\what{\bw}_t} L(\what{\bw}_t, x, y) \|^2 \nonumber
\Bigr] \\
  & \le \sigma_Q^2.
\end{align}

The variance of the mini-batch gradient estimator $\tilde{G}(\bw_t)$ around $\widehat{G}(\bw_t)$ is bounded by:
$$ \expe_t[\|\tilde{G}(\bw_t) - \widehat{G}(\bw_t)\|^2] \le \frac{\sigma_S^2 + \sigma_Q^2}{b},$$
where $b$ is the mini-batch size.
\end{assumption}

With the above setup and assumptions, we can state the convergence properties of the mixed-precision SGD algorithm.

\begin{theorem}[\textbf{Convergence of SGD with Low-Precision Gradients}]
\label{thm:main_convergence}
Under Assumptions \ref{assum:smoothness} and \ref{assum:variance_components}, running low-precision SGD with a constant learning rate $\eta \le 1/4\mathcal{L}$ for $T$ iterations, the average squared norm of the true gradient is bounded by:
\begin{equation}\label{eq:convergence_bound_final}
\begin{split}
 \frac{1}{T} \sum_{t=0}^{T-1} \expe[\|\nabla L(\bw_t)\|^2] 
 \le\;  \frac{4(L(\bw_0) - L_{\min})}{\eta T} \\
 + C_B B_W^2 + C_V \eta \mathcal{L} \left(\frac{\sigma_S^2 + \sigma_Q^2}{b}\right),
\end{split}
\end{equation}
where $C_B = (2+4\eta \mathcal{L})$ and $C_V = 2$ are constants.
\end{theorem}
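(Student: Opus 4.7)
The plan is to build a descent-lemma telescoping argument, but with careful bookkeeping of the weight-quantization bias that distinguishes this setting from vanilla non-convex SGD. First, I would apply $\mathcal{L}$-smoothness from Assumption~\ref{assum:smoothness} along the update $\bw_{t+1} - \bw_t = -\eta\,\tilde G(\bw_t)$ to obtain
$$L(\bw_{t+1}) \le L(\bw_t) - \eta\langle \nabla L(\bw_t),\, \tilde G(\bw_t)\rangle + \tfrac{\eta^2 \mathcal{L}}{2}\|\tilde G(\bw_t)\|^2,$$
then take conditional expectation $\expe_t[\cdot]$ and use Eq.~\ref{eq:expected_grad_estimator} to replace $\expe_t[\tilde G(\bw_t)]$ by $\widehat G(\bw_t)$ in the cross term.

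Next I would handle the cross term by writing $\widehat G(\bw_t) = \nabla L(\bw_t) + \bigl(\widehat G(\bw_t) - \nabla L(\bw_t)\bigr)$ and invoking Young's inequality $\langle a, b\rangle \le \tfrac{1}{2}\|a\|^2 + \tfrac{1}{2}\|b\|^2$ on the residual, giving $-\eta\langle \nabla L(\bw_t), \widehat G(\bw_t)\rangle \le -\tfrac{\eta}{2}\|\nabla L(\bw_t)\|^2 + \tfrac{\eta}{2}\|\widehat G(\bw_t) - \nabla L(\bw_t)\|^2$. Lemma~\ref{lemma:bias_BW}, combined with Jensen's inequality after taking expectation over the data, bounds the last norm by $B_W^2$. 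For the squared-norm term I would use
$$\expe_t\|\tilde G(\bw_t)\|^2 = \expe_t\|\tilde G(\bw_t) - \widehat G(\bw_t)\|^2 + \|\widehat G(\bw_t)\|^2$$
(the cross term vanishes by unbiasedness of $\tilde G$ around $\widehat G$), bound the first summand by $(\sigma_S^2 + \sigma_Q^2)/b$ via Assumption~\ref{assum:variance_components}, and expand the second via $\|a+b\|^2 \le 2\|a\|^2 + 2\|b\|^2$ to obtain $2\|\nabla L(\bw_t)\|^2 + 2 B_W^2$.

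Putting these pieces together yields
$$\expe_t L(\bw_{t+1}) \le L(\bw_t) - \bigl(\tfrac{\eta}{2} - \eta^2 \mathcal{L}\bigr)\|\nabla L(\bw_t)\|^2 + \bigl(\tfrac{\eta}{2} + \eta^2 \mathcal{L}\bigr) B_W^2 + \tfrac{\eta^2 \mathcal{L}}{2}\cdot\tfrac{\sigma_S^2 + \sigma_Q^2}{b}.$$
The step-size rule $\eta \le 1/(4\mathcal{L})$ ensures $\eta^2 \mathcal{L} \le \eta/4$, so the coefficient of $\|\nabla L(\bw_t)\|^2$ is at least $\eta/4$. Rearranging, multiplying through by $4/\eta$, taking total expectation, telescoping over $t = 0, \dots, T-1$, and using $L(\bw_T) \ge L_{\min}$ then delivers the stated bound with $C_B = 2+4\eta\mathcal{L}$ and $C_V = 2$.

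The main obstacle is precisely the weight-quantization bias: standard non-convex SGD proofs exploit unbiasedness to annihilate the cross term, whereas here every occurrence of $\widehat G - \nabla L$ must be split into a part that can be absorbed back into $\|\nabla L\|^2$ and a non-vanishing $B_W^2$ error floor, with the Young weights coordinated so that the residual coefficient of $\|\nabla L\|^2$ remains strictly positive under the chosen step size. By contrast, the $1/b$ scaling in the final term is essentially automatic: Assumption~\ref{assum:variance_components} already packages together the standard sampling variance and the per-sample stochastic-rounding variance reduction established in Lemma~\ref{lem:quant_error_scaling}.
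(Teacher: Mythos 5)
Your proposal is correct and follows essentially the same route as the paper's proof: descent lemma, bias--variance decomposition of $\tilde G$ around $\widehat G$, the bound $\|\widehat G - \nabla L\|\le B_W$ from Lemma~\ref{lemma:bias_BW}, Young's inequality to absorb the bias cross term, and telescoping under $\eta\le 1/(4\mathcal{L})$. The only cosmetic difference is that you apply Young's inequality directly to the inner-product residual, whereas the paper first uses Cauchy--Schwarz to get $-B_W\|\nabla L(\bw_t)\|$ and applies Young's afterwards; both yield the identical intermediate inequality and the same constants $C_B = 2+4\eta\mathcal{L}$, $C_V = 2$.
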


\begin{figure*}[t] %
    \centering %
    \begin{subfigure}[b]{0.47\linewidth} %
        \centering %
        \includegraphics[width=\linewidth]{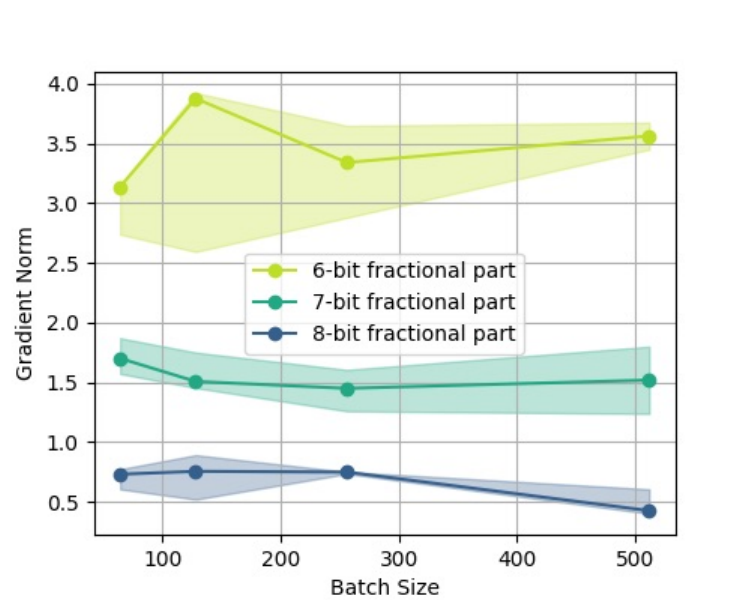} %
        \caption{Gradient norm when trained with different batch sizes. The backward activations/gradients are RTN quantized. CIFAR-10, WideResNet-16\cite{zagoruyko2016wide}.}
        \label{fig:side_by_side_sub1}
    \end{subfigure}
    \hfill
    \begin{subfigure}[b]{0.47\linewidth} 
        \centering
        \includegraphics[width=\linewidth]{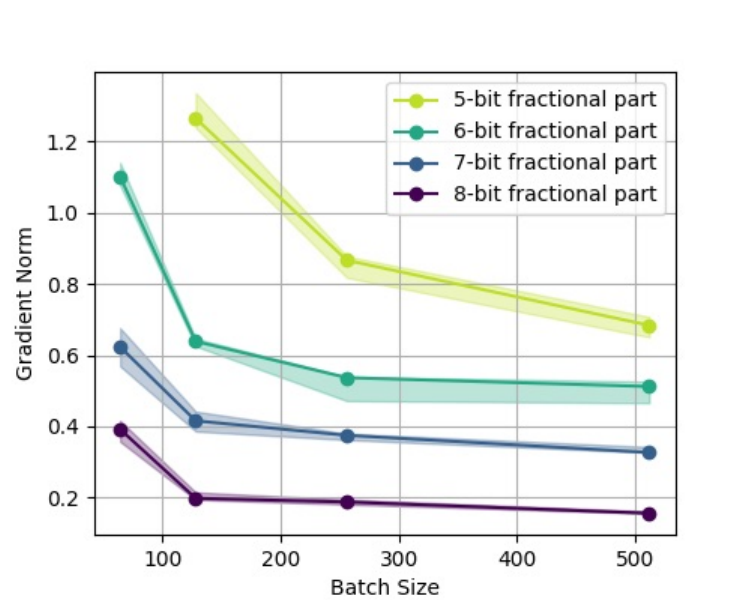} %
        \caption{Gradient norm when trained with different batch sizes. The backward activations/gradients are SR quantized. CIFAR-10, WideResNet-16.}
        \label{fig:side_by_side_sub2}
    \end{subfigure}
    \caption{Practical experiments with image models. The error bars in (a) and (b) represent the 25th-75th percentiles across independent runs.}
    \label{fig:three_side_by_side_main}
\end{figure*}

The term $\frac{4(L(\bw_0) - L_{\min})}{\eta T}$ diminishes as $T \to \infty$, indicating that the algorithm converges to a region where the expected squared gradient norm $\|\nabla L(\bw_t)\|^2$ is bounded by an error floor. The error from the difference between the QAT gradient $\nabla_{\what \bw} L(\what \bw)$ and the true gradient $\nabla L(\bw)$ is not reduced by increasing the mini-batch size $b$. The magnitude of $B_W$ is primarily determined by the precision of weight quantization $\Delta_W$. The variance term $C_V \eta L (\sigma_S^2 + \sigma_Q^2)/b$ captures the noise from two sources: data sampling variance ($\sigma_S^2/b$) and stochastic rounding variance. This entire variance contribution is inversely proportional to the mini-batch size $b$. If activation/gradient precision is reduced, $\sigma_Q^2$ increases (e.g., quadrupling if precision is reduced by 1 bit, as $\sigma_Q^2 \propto 2^{-2B}$). \Cref{thm:main_convergence} shows that increasing the batch size $b$ can directly compensate for this increase in $\sigma_Q^2$, keeping the term $(\sigma_S^2 + \sigma_Q^2)/b$ constant or even reducing it. This is a key mechanism for enabling aggressive quantization of activations/gradients.

\section{Experiment}
\label{sec:experiments}

\begin{figure}[t!]
\centering
        \includegraphics[width=\linewidth]{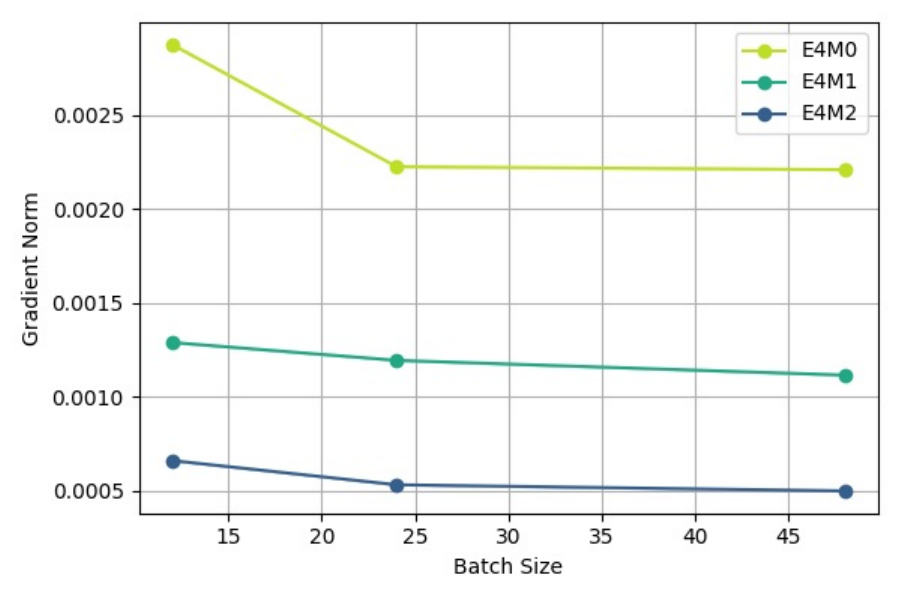}
        \caption{Gradient norm when trained with different batch sizes. The backward activations/gradients are SR quantized. LMSYS-chat, Llama-3.2-3B.}
        \label{fig:side_by_side_sub3}
\end{figure}

Our experiments evaluate both gradient norms and downstream task performance, allowing us to verify the theoretical findings and examine their practical implications. Figure~\ref{fig:three_side_by_side_main} reports results on CIFAR-10 and figure~\ref{fig:side_by_side_sub3} reports results on LMSYS-Chat~\cite{zheng2023lmsys}, showing that the degradation caused by SR in activation and gradient quantization can be substantially mitigated by increasing the batch size, whereas RTN does not benefit from such scaling.

For downstream evaluation, we fine-tuned Llama-3.2-3B~\cite{dubey2024llama} on GSM8K~\cite{cobbe2021training} using both Adam and SGD, and BERT-base-uncased~\cite{devlin2019bert} on the GLUE benchmark. The GSM8K results (Tables~\ref{tab:adam_results} and \ref{tab:sgd_results}) are consistent with our theoretical predictions, showing clear accuracy improvements as the batch size increases across quantization formats. The GLUE~\cite{wang2018glue} benchmark results (Table~\ref{tab:glue_results}) further confirm this trend, with consistent gains across tasks. Taken together, our theoretical findings—validated by empirical evidence—provide actionable guidelines for training under tight resource constraints.

\begin{table}[h]
\centering
\caption{Llama-3.2B fine-tuning on GSM8K with Adam. Reported numbers are end-of-training accuracy.}
\label{tab:adam_results}
\footnotesize
\renewcommand{\arraystretch}{0.9}
\setlength{\tabcolsep}{6pt}
\begin{tabular}{lccc}
\toprule
\textbf{Quant / Batch Size} & \textbf{8} & \textbf{16} & \textbf{32} \\
\midrule
\texttt{E4M0} & 0.210 & 0.265 & 0.290 \\
\texttt{E4M1} & 0.245 & 0.275 & 0.345 \\
\texttt{E4M2} & 0.260 & 0.305 & 0.340 \\
\bottomrule
\end{tabular}
\end{table}

\begin{table}[h]
\centering
\caption{Llama-3.2B fine-tuning on GSM8K with SGD. Reported numbers are end-of-training accuracy.}
\label{tab:sgd_results}
\footnotesize
\renewcommand{\arraystretch}{0.9}
\setlength{\tabcolsep}{6pt}
\begin{tabular}{lcc}
\toprule
\textbf{Quant / Batch Size} & \textbf{8} & \textbf{32} \\
\midrule
\texttt{E4M0} & 0.180 & 0.310 \\
\texttt{E4M1} & 0.290 & 0.390 \\
\texttt{E4M2} & 0.315 & 0.365 \\
\bottomrule
\end{tabular}
\end{table}

\begin{table}[h!]
\centering
\caption{Fine-tuning BERT-base-uncased on GLUE benchmark tasks. We observe a general trend of improved scores as the batch size increases. \\ \dag Matthews correlation coefficient. \ddag Pearson/Spearman correlation coefficients.}
\label{tab:glue_results}
\footnotesize
\resizebox{0.49\textwidth}{!}{
\begin{tabular}{lcccccc}
\toprule
\textbf{Task} & \textbf{Quant} & \textbf{16} & \textbf{32} & \textbf{64} & \textbf{128} & \textbf{256} \\
\midrule
\textbf{CoLA\dag} & \texttt{E4M0} & 0.332 & 0.442 & 0.447 & 0.485 & 0.528 \\
& \texttt{E4M1} & 0.341 & 0.501 & 0.404 & 0.560 & 0.564 \\
& \texttt{E4M2} & 0.408 & 0.443 & 0.539 & 0.580 & 0.618 \\
\midrule
\textbf{MRPC} & \texttt{E4M0} & 0.815 & 0.820 & 0.840 & 0.853 & 0.862 \\
& \texttt{E4M1} & 0.825 & 0.826 & 0.847 & 0.853 & 0.861 \\
& \texttt{E4M2} & 0.846 & 0.856 & 0.879 & 0.883 & 0.908 \\
\midrule
\textbf{MNLI} & \texttt{E4M0} & 0.619 & 0.647 & 0.683 & 0.717 & 0.745 \\
& \texttt{E4M1} & 0.652 & 0.689 & 0.712 & 0.755 & 0.782 \\
& \texttt{E4M2} & 0.674 & 0.731 & 0.755 & 0.777 & 0.793 \\
\midrule
\textbf{QNLI} & \texttt{E4M0} & 0.766 & 0.780 & 0.820 & 0.827 & 0.826 \\
& \texttt{E4M1} & 0.797 & 0.837 & 0.833 & 0.846 & 0.869 \\
& \texttt{E4M2} & 0.818 & 0.840 & 0.862 & 0.868 & 0.898 \\
\midrule
\textbf{QQP} & \texttt{E4M0} & 0.732 & 0.743 & 0.768 & 0.748 & 0.806 \\
& \texttt{E4M1} & 0.755 & 0.768 & 0.793 & 0.808 & 0.820 \\
& \texttt{E4M2} & 0.773 & 0.856 & 0.799 & 0.822 & 0.835 \\
\midrule
\textbf{RTE} & \texttt{E4M0} & 0.553 & 0.446 & 0.511 & 0.525 & 0.554 \\
& \texttt{E4M1} & 0.504 & 0.489 & 0.504 & 0.482 & 0.468 \\
& \texttt{E4M2} & 0.540 & 0.504 & 0.518 & 0.489 & 0.554 \\
\midrule
\textbf{STS-B\ddag} & \texttt{E4M0} & 0.757 & 0.816 & 0.824 & 0.824 & 0.842 \\
& \texttt{E4M1} & 0.808 & 0.845 & 0.833 & 0.843 & 0.845 \\
& \texttt{E4M2} & 0.789 & 0.826 & 0.839 & 0.848 & 0.847 \\
\midrule
\textbf{SST-2} & \texttt{E4M0} & 0.869 & 0.862 & 0.862 & 0.897 & 0.897 \\
& \texttt{E4M1} & 0.876 & 0.873 & 0.878 & 0.901 & 0.908 \\
& \texttt{E4M2} & 0.860 & 0.895 & 0.897 & 0.906 & 0.906 \\
\bottomrule
\end{tabular}
}
\end{table}

\section{Practical Implications}

\subsection{Trading Precision for Batch Size on Edge Devices}

A key takeaway from our analysis is the explicit trade-off between quantization precision and mini-batch size. On edge hardware, there is often a significant imbalance between the availability of low-precision and high-precision compute units. For instance, Qualcomm’s Snapdragon X Elite offers up to 45 TOPS of \texttt{INT8} throughput via its Hexagon NPU, while its Adreno GPU provides 4.6 \texttt{FP32} TFLOPS \cite{qualcomm_snapdragon_x_elite}. This means low-precision compute exceeds high-precision compute by roughly an order of magnitude.

Practitioners can leverage this imbalance by using our framework. By accumulating gradients over several micro-batches (e.g., 4–8), one can emulate a larger effective batch size. This allows the use of abundant \texttt{INT8}/\texttt{INT4} MACs for the bulk of the computation. Our theorem guarantees that the extra stochastic variance introduced by the lower precision is compensated for, as it decays at a rate of $1/b$, ensuring that convergence remains stable.

\subsection{Hardware Overhead of Stochastic Rounding}

A natural concern is the hardware cost of implementing SR compared to the simpler RTN. While a fully precise SR unit can be expensive, practical approximations can be implemented with minimal overhead. To provide a concrete estimate, we analyzed the resource usage on an FPGA for converting from \texttt{FP32} to \texttt{FP8} (\texttt{E4M3}).

The mantissa rounding logic requires $9$ look-up tables (LUTs) for RTN. In contrast, an SR implementation using a $6$-bit linear feedback shift register (LFSR) as a pseudo-random number generator requires only $6$ extra registers and $9$ LUTs. When compared to the overall cost of a fused multiply-add (FMA) unit (e.g., an \texttt{E4M3} FMA requires $223$ LUTs and $140$ registers), the difference between SR and RTN is negligible. For a $16\times16$ systolic array ($256$ MACs), only $32$ rounding units are needed for the conversion, making the additional cost of SR insignificant in the context of the entire accelerator.

Furthermore, the feasibility of efficient SR implementation in hardware has been demonstrated in prior work. For example, \cite{zhang2022bfx} reports that their BFP converter, which includes the LFSR for SR, accounts for just $4.56\%$ of the chip area and $1.77W$ of power, compared to $47.79\%$ and $15.61W$ for the systolic array itself. This confirms that the benefits of SR can be realized without a significant hardware penalty.

\section{Conclusion}
\label{sec:conclusion}
This paper studied SR in mixed-precision training and provided a theoretical analysis of mini-batch SGD under quantization. We showed that the variance introduced by SR in activation and gradient quantization decays inversely with batch size, while bias from weight quantization remains unaffected. Empirical results confirmed that SR consistently outperforms deterministic RTN, particularly under aggressive quantization.

These results offer practical guidance for mixed-precision training: larger batch sizes can offset SR-induced noise, enabling more aggressive quantization without loss of convergence. This insight has direct implications for fine-tuning and training LLMs under resource constraints. Future work may examine adaptive batch sizing and broader quantization schemes.

\section*{Acknowledgments}

We thank the anonymous reviewers for their insightful comments and suggestions that helped improve this paper. We are also grateful to Ebby Samson for his help during the rebuttal on hardware cost modeling. We acknowledge the resources provided by AMD that made this research possible.

\section*{Limitations}
\label{sec:limitations}

While this work provides valuable insights into SR-based mixed-precision training, several limitations should be acknowledged:

\begin{itemize}
    \item \textbf{Quantization Schemes and Complementary Techniques:} Our analysis primarily centered on uniform quantization with SR. The interaction of SR and batch size with other advanced quantization techniques, such as non-uniform quantization, block-wise quantization (which we noted as complementary), or learned quantization, remains an area for future exploration. Similarly, a detailed investigation of the interplay with other mixed-precision optimization techniques like dynamic loss scaling or adaptive gradient clipping was beyond the scope of this paper.

    \item \textbf{Hardware Considerations and Performance Metrics:} Our experimental validation focused on convergence behavior (gradient norms, accuracy) rather than direct measurements of training speedup, memory footprint reduction, or energy consumption on target edge hardware. Such practical performance metrics are crucial for assessing the full benefits for edge deployment.
\end{itemize}

These limitations offer avenues for future research to build upon the foundational understanding of SR in low-precision training established in this work.

\bibliography{bibliography/reference}

\begin{thebibliography}{35}
\providecommand{\natexlab}[1]{#1}

\bibitem[{Banner et~al.(2018)Banner, Hubara, Hoffer, and
  Soudry}]{banner2018scalable}
Ron Banner, Itay Hubara, Elad Hoffer, and Daniel Soudry. 2018.
\newblock Scalable methods for 8-bit training of neural networks.
\newblock \emph{Advances in neural information processing systems}, 31.

\bibitem[{Blake et~al.(2023)Blake, Orr, and Luschi}]{blake2023unit}
Charlie Blake, Douglas Orr, and Carlo Luschi. 2023.
\newblock Unit scaling: Out-of-the-box low-precision training.
\newblock In \emph{International Conference on Machine Learning}, pages
  2548--2576. PMLR.

\bibitem[{Chen et~al.(2020)Chen, Gai, Yao, Mahoney, and
  Gonzalez}]{chen2020statistical}
Jianfei Chen, Yu~Gai, Zhewei Yao, Michael~W. Mahoney, and Joseph~E. Gonzalez.
  2020.
\newblock A statistical framework for low-bitwidth training of deep neural
  networks.
\newblock In \emph{Advances in Neural Information Processing Systems},
  volume~33, pages 883--894.

\bibitem[{Chmiel et~al.(2021)Chmiel, Banner, Hoffer, Ben~Yaacov, and
  Soudry}]{chmiel2021logarithmic}
Brian Chmiel, Ron Banner, Elad Hoffer, Hilla Ben~Yaacov, and Daniel Soudry.
  2021.
\newblock Logarithmic unbiased quantization: Practical 4-bit training in deep
  learning.
\newblock \emph{arXiv preprint arXiv:1909.13863}.

\bibitem[{Cobbe et~al.(2021)Cobbe, Kosaraju, Bavarian, Chen, Jun, Kaiser,
  Plappert, Tworek, Hilton, Nakano et~al.}]{cobbe2021training}
Karl Cobbe, Vineet Kosaraju, Mohammad Bavarian, Mark Chen, Heewoo Jun, Lukasz
  Kaiser, Matthias Plappert, Jerry Tworek, Jacob Hilton, Reiichiro Nakano, and
  1 others. 2021.
\newblock Training verifiers to solve math word problems.
\newblock \emph{arXiv preprint arXiv:2110.14168}.

\bibitem[{Croci et~al.(2022)Croci, Fasi, Higham, Mary, and
  Mikaitis}]{higham2022stochastic}
Matteo Croci, Massimiliano Fasi, Nicholas~J. Higham, Theo Mary, and Mantas
  Mikaitis. 2022.
\newblock \href {https://doi.org/10.1098/rsos.211631} {Stochastic rounding:
  Implementation, error analysis and applications}.
\newblock \emph{Royal Society Open Science}, 9(3):211631.

\bibitem[{Das et~al.(2018)Das, Mellempudi, Mudigere, Kalamkar, Avancha,
  Banerjee, Sridharan, Vaidyanathan, Kaul, Georganas, Heinecke, Dubey, Corbal,
  Shustrov, Dubtsov, Fomenko, and Pirogov}]{das2018mixed}
Dipankar Das, Naveen Mellempudi, Dheevatsa Mudigere, Dhiraj Kalamkar, Sasikanth
  Avancha, Kunal Banerjee, Srinivas Sridharan, Karthik Vaidyanathan, Bharat
  Kaul, Evangelos Georganas, Alexander Heinecke, Pradeep Dubey, Jesus Corbal,
  Nikita Shustrov, Roma Dubtsov, Evarist Fomenko, and Vadim Pirogov. 2018.
\newblock \href {https://doi.org/10.48550/arXiv.1802.00930} {Mixed precision
  training of convolutional neural networks using integer operations}.
\newblock \emph{arXiv preprint arXiv:1802.00930}.

\bibitem[{Devlin et~al.(2019)Devlin, Chang, Lee, and
  Toutanova}]{devlin2019bert}
Jacob Devlin, Ming-Wei Chang, Kenton Lee, and Kristina Toutanova. 2019.
\newblock Bert: Pre-training of deep bidirectional transformers for language
  understanding.
\newblock In \emph{Proceedings of the 2019 conference of the North American
  chapter of the association for computational linguistics: human language
  technologies, volume 1 (long and short papers)}, pages 4171--4186.

\bibitem[{Dong et~al.(2019)Dong, Yao, Gholami, Mahoney, and
  Keutzer}]{dong2019hawq}
Zhen Dong, Zhewei Yao, Amir Gholami, Michael~W Mahoney, and Kurt Keutzer. 2019.
\newblock Hawq: Hessian aware quantization of neural networks with
  mixed-precision.
\newblock In \emph{Proceedings of the IEEE/CVF international conference on
  computer vision}, pages 293--302.

\bibitem[{Dubey et~al.(2024)Dubey, Jauhri, Pandey, Kadian, Al-Dahle, Letman,
  Mathur, Schelten, Yang, Fan et~al.}]{dubey2024llama}
Abhimanyu Dubey, Abhinav Jauhri, Abhinav Pandey, Abhishek Kadian, Ahmad
  Al-Dahle, Aiesha Letman, Akhil Mathur, Alan Schelten, Amy Yang, Angela Fan,
  and 1 others. 2024.
\newblock The llama 3 herd of models.
\newblock \emph{arXiv e-prints}, pages arXiv--2407.

\bibitem[{Esser et~al.(2019)Esser, McKinstry, Bablani, Appuswamy, and
  Modha}]{esser2019learned}
Steven~K Esser, Jeffrey~L McKinstry, Deepika Bablani, Rathinakumar Appuswamy,
  and Dharmendra~S Modha. 2019.
\newblock Learned step size quantization.
\newblock \emph{arXiv preprint arXiv:1902.08153}.

\bibitem[{Gupta et~al.(2015)Gupta, Agrawal, Gopalakrishnan, and
  Narayanan}]{gupta2015deep}
Suyog Gupta, Ankur Agrawal, Kailash Gopalakrishnan, and Pritish Narayanan.
  2015.
\newblock Deep learning with limited numerical precision.
\newblock In \emph{International Conference on Machine Learning}, pages
  1737--1746. PMLR.

\bibitem[{Hubara et~al.(2018)Hubara, Courbariaux, Soudry, El-Yaniv, and
  Bengio}]{hubara2018quantized}
Itay Hubara, Matthieu Courbariaux, Daniel Soudry, Ran El-Yaniv, and Yoshua
  Bengio. 2018.
\newblock \href {https://jmlr.org/papers/volume18/16-456/16-456.pdf} {Quantized
  neural networks: Training neural networks with low precision weights and
  activations}.
\newblock \emph{Journal of Machine Learning Research}, 18(187):1--30.

\bibitem[{Jacob et~al.(2018)Jacob, Kligys, Chen, Zhu, Tang, Howard, Adam, and
  Kalenichenko}]{jacob2018quantization}
Benoit Jacob, Skirmantas Kligys, Bo~Chen, Menglong Zhu, Matthew Tang, Andrew
  Howard, Hartwig Adam, and Dmitry Kalenichenko. 2018.
\newblock Quantization and training of neural networks for efficient
  integer-arithmetic-only inference.
\newblock In \emph{Proceedings of the IEEE conference on computer vision and
  pattern recognition}, pages 2704--2713.

\bibitem[{Joshi et~al.(2020)Joshi, Karunaratne, Le~Gallo, Boybat, Piveteau,
  Sebastian, Rajendran, and Eleftheriou}]{joshi2020essop}
Vinay Joshi, Geethan Karunaratne, Manuel Le~Gallo, Irem Boybat, Christophe
  Piveteau, Abu Sebastian, Bipin Rajendran, and Evangelos Eleftheriou. 2020.
\newblock \emph{ESSOP: Efficient and scalable stochastic outer product
  architecture for deep learning}, pages 1--5.
\newblock IEEE.

\bibitem[{Kandala and Varshney(2024)}]{kandala2024yourdata}
Savitha~Viswanadh Kandala and Ambuj Varshney. 2024.
\newblock Your data, your model: A framework for training and deploying
  foundational language models for embedded devices.
\newblock In \emph{Proceedings of the 30th Annual International Conference on
  Mobile Computing and Networking}, pages 1704--1706.

\bibitem[{Knowles(2021)}]{graphcore}
Simon Knowles. 2021.
\newblock \href {https://doi.org/10.1109/HCS52781.2021.9567075} {Graphcore}.
\newblock In \emph{2021 IEEE Hot Chips 33 Symposium (HCS)}, pages 1--25.

\bibitem[{Lee et~al.(2023)Lee, Sharma, and Aiken}]{lee2023training}
Wonyeol Lee, Rahul Sharma, and Alex Aiken. 2023.
\newblock Training with mixed-precision floating-point assignments.
\newblock \emph{Transactions on Machine Learning Research}.

\bibitem[{Li et~al.(2017)Li, De, Xu, Studer, Samet, and
  Goldstein}]{li2017training}
Hao Li, Soham De, Zheng Xu, Christoph Studer, Hanan Samet, and Tom Goldstein.
  2017.
\newblock \href
  {https://papers.nips.cc/paper/2017/hash/1c303b0eed3133200cf715285011b4e4-Abstract.html}
  {Training quantized nets: A deeper understanding}.
\newblock In \emph{Advances in Neural Information Processing Systems},
  volume~30.

\bibitem[{Micikevicius et~al.(2017)Micikevicius, Narang, Alben, Diamos, Elsen,
  Garcia, Ginsburg, Houston, Kuchaiev, Venkatesh, and
  Wu}]{micikevicius2018mixed}
Paulius Micikevicius, Sharan Narang, Jonah Alben, Gregory Diamos, Erich Elsen,
  David Garcia, Boris Ginsburg, Michael Houston, Oleksii Kuchaiev, Ganesh
  Venkatesh, and Hao Wu. 2017.
\newblock \href {https://arxiv.org/abs/1710.03740} {Mixed precision training}.
\newblock \emph{arXiv preprint arXiv:1710.03740}.

\bibitem[{Peng et~al.(2023)Peng, Wu, Wei, Zhao, Yang, Liu, Xiong, Yang, Ni, Hu,
  Li, Zhang, Li, Ning, Wang, Zhang, Liu, Chau, Hu, and Cheng}]{peng2023fp8lm}
Houwen Peng, Kan Wu, Yixuan Wei, Guoshuai Zhao, Yuxiang Yang, Ze~Liu, Yifan
  Xiong, Ziyue Yang, Bolin Ni, Jingcheng Hu, Ruihang Li, Miaosen Zhang, Chen
  Li, Jia Ning, Ruizhe Wang, Zheng Zhang, Shuguang Liu, Joe Chau, Han Hu, and
  Peng Cheng. 2023.
\newblock \href {https://arxiv.org/abs/2310.18313} {Fp8-lm: Training fp8 large
  language models}.
\newblock \emph{arXiv preprint arXiv:2310.18313}.

\bibitem[{{Qualcomm Technologies, Inc.}(2024)}]{qualcomm_snapdragon_x_elite}
{Qualcomm Technologies, Inc.} 2024.
\newblock {Snapdragon X Elite Laptop Platform}.
\newblock
  \url{https://www.qualcomm.com/products/mobile/snapdragon/laptops-and-tablets/snapdragon-x-elite}.
\newblock Accessed: 2024-09-11.

\bibitem[{Sun et~al.(2020)Sun, Wang, Chen, Ni, Agrawal, Cui, Venkataramani,
  El~Maghraoui, Srinivasan, and Gopalakrishnan}]{sun2020ultra}
Xiao Sun, Naigang Wang, Chia-Yu Chen, Jiamin Ni, Ankur Agrawal, Xiaodong Cui,
  Swagath Venkataramani, Kaoutar El~Maghraoui, Vijayalakshmi~Viji Srinivasan,
  and Kailash Gopalakrishnan. 2020.
\newblock Ultra-low precision 4-bit training of deep neural networks.
\newblock \emph{Advances in Neural Information Processing Systems},
  33:1796--1807.

\bibitem[{Wang et~al.(2018{\natexlab{a}})Wang, Singh, Michael, Hill, Levy, and
  Bowman}]{wang2018glue}
Alex Wang, Amanpreet Singh, Julian Michael, Felix Hill, Omer Levy, and Samuel~R
  Bowman. 2018{\natexlab{a}}.
\newblock \href {https://arxiv.org/abs/1804.07461} {Glue: A multi-task
  benchmark and analysis platform for natural language understanding}.
\newblock \emph{arXiv preprint arXiv:1804.07461}.

\bibitem[{Wang et~al.(2018{\natexlab{b}})Wang, Choi, Brand, Chen, and
  Gopalakrishnan}]{wang2018training}
Naigang Wang, Jungwook Choi, Daniel Brand, Chia-Yu Chen, and Kailash
  Gopalakrishnan. 2018{\natexlab{b}}.
\newblock Training deep neural networks with 8-bit floating point numbers.
\newblock \emph{Advances in neural information processing systems}, 31.

\bibitem[{Wu et~al.(2018)Wu, Li, Chen, and Shi}]{wu2018training}
Shuang Wu, Guoqi Li, Feng Chen, and Luping Shi. 2018.
\newblock Training and inference with integers in deep neural networks.
\newblock In \emph{International Conference on Learning Representations}.

\bibitem[{Xia et~al.(2021)Xia, Anthonissen, Hochstenbach, and
  Koren}]{xia2021simple}
Lu~Xia, Martijn Anthonissen, Michiel Hochstenbach, and Barry Koren. 2021.
\newblock A simple and efficient stochastic rounding method for training neural
  networks in low precision.
\newblock \emph{arXiv preprint arXiv:2103.13445}.

\bibitem[{Xia et~al.(2025)Xia, Massei, and Hochstenbach}]{xia2023convergence}
Lu~Xia, Stefano Massei, and Michiel~E Hochstenbach. 2025.
\newblock On the convergence of the gradient descent method with stochastic
  fixed-point rounding errors under the polyak--{\l}ojasiewicz inequality.
\newblock \emph{Computational Optimization and Applications}, pages 1--47.

\bibitem[{Xia et~al.(2022)Xia, Massei, Hochstenbach, and
  Koren}]{xia2022influence}
Lu~Xia, Stefano Massei, Michiel~E Hochstenbach, and Barry Koren. 2022.
\newblock On the influence of stochastic roundoff errors and their bias on the
  convergence of the gradient descent method with low-precision floating-point
  computation.
\newblock \emph{arXiv preprint arXiv:2202.12276}.

\bibitem[{Xia et~al.(2024)Xia, Massei, Hochstenbach, and
  Koren}]{xia2023stochastic}
Lu~Xia, Stefano Massei, Michiel~E Hochstenbach, and Barry Koren. 2024.
\newblock On stochastic roundoff errors in gradient descent with low-precision
  computation.
\newblock \emph{Journal of Optimization Theory and Applications},
  200(2):634--668.

\bibitem[{Yang et~al.(2020)Yang, Deng, Wu, Yan, Xie, and Li}]{yang2020training}
Yukuan Yang, Lei Deng, Shuang Wu, Tianyi Yan, Yuan Xie, and Guoqi Li. 2020.
\newblock Training high-performance and large-scale deep neural networks with
  full 8-bit integers.
\newblock \emph{Neural Networks}, 125:70--82.

\bibitem[{Zagoruyko and Komodakis(2016)}]{zagoruyko2016wide}
Sergey Zagoruyko and Nikos Komodakis. 2016.
\newblock Wide residual networks.
\newblock In \emph{British Machine Vision Conference 2016}. British Machine
  Vision Association.

\bibitem[{Zhang et~al.(2022)Zhang, Li, Wang, Liu, and Zhao}]{zhang2022bfx}
Yiren Zhang, Pu~Li, Xuefei Wang, Zizhang Liu, and Weisheng Zhao. 2022.
\newblock Bfx: A bit-flexible format for training and inference of neural
  networks.
\newblock In \emph{2022 IEEE International Symposium on High-Performance
  Computer Architecture (HPCA)}, pages 63--78. IEEE.

\bibitem[{Zheng et~al.(2023)Zheng, Chiang, Sheng, Li, Zhuang, Wu, Zhuang, Li,
  Lin, Xing et~al.}]{zheng2023lmsys}
Lianmin Zheng, Wei-Lin Chiang, Ying Sheng, Tianle Li, Siyuan Zhuang, Zhanghao
  Wu, Yonghao Zhuang, Zhuohan Li, Zi~Lin, Eric~P Xing, and 1 others. 2023.
\newblock Lmsys-chat-1m: A large-scale real-world llm conversation dataset.
\newblock \emph{arXiv preprint arXiv:2309.11998}.

\bibitem[{Zhou et~al.(2017)Zhou, Yao, Guo, Xu, and Chen}]{zhou2017incremental}
Aojun Zhou, Anbang Yao, Yiwen Guo, Lin Xu, and Yurong Chen. 2017.
\newblock Incremental network quantization: Towards lossless cnns with
  low-precision weights.
\newblock \emph{arXiv preprint arXiv:1702.03044}.

\end{thebibliography}

\appendix

\onecolumn

\section{Proof of Lemma~\ref{lemma:bias_BW}}
\label{proof:lemma:bias_BW}
\begin{proof}
Assume $L(\bw)$ is $\mathcal{L}$-smooth, i.e.\ $\|\nabla L(\bw')-\nabla L(\bw)\|\le \mathcal{L}\|\bw'-\bw\|$ for all $\bw',\bw$.
Let $\hat\bw$ be obtained by elementwise uniform quantization with step $\Delta_W$. For RTN
\begin{equation}
|w_i-\hat w_i|\le \tfrac{\Delta_W}{2}\quad\forall i
\;\;\Rightarrow\;\;
\|\hat\bw-\bw\|^2=\sum_{i=1}^d (w_i-\hat w_i)^2 \le d\!\left(\tfrac{\Delta_W}{2}\right)^{\!2},
\end{equation}
hence $\|\hat\bw-\bw\|\le \tfrac{\sqrt d}{2}\,\Delta_W$.
By $\mathcal{L}$-smoothness,
\begin{equation}
\|\nabla L(\hat\bw)-\nabla L(\bw)\|\le \mathcal{L}\,\|\hat\bw-\bw\|\le \mathcal{L}\,\tfrac{\sqrt d}{2}\,\Delta_W.
\end{equation}
Therefore the bias bound is
\begin{equation}
B_W:=\|\nabla L(\hat\bw)-\nabla L(\bw)\|\;\le\; \mathcal{L}\,\tfrac{\sqrt d}{2}\,\Delta_W
\end{equation}

And for SR, $|w_i-\hat w_i|\le \Delta_W$, the bias bound is 

\begin{equation}
B_W \leq \mathcal{L}\sqrt d\Delta_W
\end{equation}

\end{proof}

\section{Proof of MSE Decomposition in Lemma~\ref{lem:q_mult_error_fully_quantized}}
\label{app:proof_lemma_error_decomp}
\begin{proof}
Let $\nabla W_{ij}^{(\text{true-mini})}=\frac{1}{b}\sum_{k\in\text{batch}} A_{ki}A^{\mathrm{out}}_{kj}$ and define
\begin{equation}
E \;=\; \nabla W_{ij}^{(\text{full})} - \widehat{\nabla W}_{ij}^{(\text{quant-mini})}
\;=\; S + Q',
\end{equation}
where $S=\nabla W_{ij}^{(\text{full})}-\nabla W_{ij}^{(\text{true-mini})}$ and
\begin{equation}
Q' \;=\; \nabla W_{ij}^{(\text{true-mini})}-\widehat{\nabla W}_{ij}^{(\text{quant-mini})}
= \frac{1}{b}\sum_{k\in\text{batch}}\!\Big(A_{ki}A^{\mathrm{out}}_{kj} - \widehat{A}_{ki}\,\widehat{A^{\mathrm{out}}}_{kj}\Big).
\end{equation}

Elementwise SR is unbiased and independent across elements/samples:
$\mathbb{E}[\widehat{A}\mid A]=A$, $\mathbb{E}[\widehat{A^{\mathrm{out}}}\mid A^{\mathrm{out}}]=A^{\mathrm{out}}$, and the errors are independent of $(A,A^{\mathrm{out}})$ given the values.

Conditioning on the realized mini-batch, $S$ is deterministic and
\begin{equation}
\mathbb{E}_{\epsilon}[Q'\mid \text{mini-batch}]
= \frac{1}{b}\sum_{k}\big(A_{ki}A^{\mathrm{out}}_{kj}-\mathbb{E}_{\epsilon}[\widehat{A}_{ki}\widehat{A^{\mathrm{out}}}_{kj}\mid\text{samples}]\big)
=0,
\end{equation}
because $\mathbb{E}[\widehat{A}_{ki}\widehat{A^{\mathrm{out}}}_{kj}\mid\text{samples}]=A_{ki}A^{\mathrm{out}}_{kj}$ by independence and zero-mean error.
Hence
\begin{equation}
\mathbb{E}[E^2]=\mathbb{E}[(S+Q')^2]=\mathbb{E}[S^2]+\mathbb{E}[(Q')^2]+2\,\mathbb{E}[SQ']
=\underbrace{\mathbb{E}[S^2]}_{T^{\mathrm{s}}_{ij}}+\underbrace{\mathbb{E}[(Q')^2]}_{T^{\bbQ}_{ij}},
\end{equation}
since $\mathbb{E}[SQ']=\mathbb{E}_{\text{samples}}[S\,\mathbb{E}_{\epsilon}[Q'|\text{samples}]]=0$.
\end{proof}

\section{Proof of Lemma~\ref{lem:quant_error_scaling}}
\label{app:proof_lemma_quant_error_scaling}
\begin{proof}
From Lemma~\ref{lem:q_mult_error_fully_quantized},
\begin{equation}
T^{\bbQ}_{ij}
=\mathbb{E}\!\left[\!\left(\frac{1}{b}\sum_{k}E_{k,ij}\right)^{\!2}\right],\qquad
E_{k,ij}=A_{ki}A^{\mathrm{out}}_{kj}-\widehat{A}_{ki}\,\widehat{A^{\mathrm{out}}}_{kj}.
\end{equation}
Let $X=A_{ki}$, $Y=A^{\mathrm{out}}_{kj}$ and $e_X=X-\widehat{X}$, $e_Y=Y-\widehat{Y}$. Then
\begin{equation}
E_{k,ij}=XY-\widehat{X}\widehat{Y}=X e_Y + Y e_X - e_X e_Y.
\end{equation}
Under the SR assumptions, $\mathbb{E}[e_X\mid X]=\mathbb{E}[e_Y\mid Y]=0$, $(e_X,e_Y)$ are independent across elements/samples and independent of $(X,Y)$; thus $E_{k,ij}$ are i.i.d., zero-mean across $k$, and
\begin{equation}
T^{\bbQ}_{ij}=\frac{1}{b}\,\sigma^2_{E_{ij}},\qquad
\sigma^2_{E_{ij}}=\mathbb{E}\big[(X e_Y + Y e_X - e_X e_Y)^2\big].
\end{equation}
Expanding and using the independence/zero-mean properties,
\begin{equation}
\sigma^2_{E_{ij}}
= \mathbb{E}[X^2]\,\underbrace{\mathbb{E}[e_Y^2]}_{\sigma_{A^{\mathrm{out}}}^2}
+ \mathbb{E}[Y^2]\,\underbrace{\mathbb{E}[e_X^2]}_{\sigma_{A}^2}
+ \sigma_{A}^2\,\sigma_{A^{\mathrm{out}}}^2.
\end{equation}
For uniform quantization with step sizes $\Delta_A,\Delta_{A^{\mathrm{out}}}$,
$|e_X|\le \Delta_A$ and $|e_Y|\le \Delta_{A^{\mathrm{out}}}$, hence
$\sigma_A^2\le \Delta_A^2$ and $\sigma_{A^{\mathrm{out}}}^2\le \Delta_{A^{\mathrm{out}}}^2$.
Therefore
\begin{equation}
T^{\bbQ}_{ij}
\;\le\; \frac{1}{b}\left(
\mathbb{E}[A_{ki}^2]\Delta_{A^{\mathrm{out}}}^2
+ \mathbb{E}[(A^{\mathrm{out}}_{kj})^2]\Delta_A^2
+ \Delta_A^2\,\Delta_{A^{\mathrm{out}}}^2
\right).
\end{equation}
If $\Delta\propto 2^{-B}$ (bitwidth $B$), then $T^{\bbQ}_{ij} \propto \tfrac{1}{b}\,2^{-2B}$ with constants depending on $\mathbb{E}[A_{ki}^2]$ and $\mathbb{E}[(A^{\mathrm{out}}_{kj})^2]$.
\end{proof}

\section{Proof of Theorem~\ref{thm:main_convergence}}
\begin{proof}
Let $L(\bw)=\mathbb{E}_{x,y}[L(\bw;x,y)]$ denote the expected loss. Assume:
(i) $L(\bw)$ is $\mathcal{L}$-smooth, i.e., $\|\nabla L(\bw')-\nabla L(\bw)\|\le \mathcal{L}\|\bw'-\bw\|$ for all $\bw',\bw$; 
(ii) $L(\bw)$ is bounded below by $L_{\min}$.
The SGD update is $\bw_{t+1}=\bw_t-\eta\,\tilde G(\bw_t)$ with constant step size $\eta>0$.
Define
\begin{equation}
\widehat{G}(\bw_t):=\mathbb{E}_t[\tilde G(\bw_t)],\qquad
V^2:=\frac{\sigma_S^2+\sigma_Q^2}{b},
\end{equation}
so that $\mathbb{E}_t[\|\tilde G(\bw_t)\|^2]\le V^2+\|\widehat{G}(\bw_t)\|^2$.

\paragraph{Step 1: $\mathcal{L}$-smoothness.}
By $\mathcal{L}$-smoothness and $\bw_{t+1}-\bw_t=-\eta \tilde G(\bw_t)$,
\begin{align}
\mathbb{E}_t[L(\bw_{t+1})]
&\le L(\bw_t)+\langle \nabla L(\bw_t),\mathbb{E}_t[\bw_{t+1}-\bw_t]\rangle
+\frac{\mathcal{L}}{2}\,\mathbb{E}_t[\|\bw_{t+1}-\bw_t\|^2]\nonumber\\
&= L(\bw_t)-\eta \langle \nabla L(\bw_t),\widehat{G}(\bw_t)\rangle
+\frac{\eta^2 \mathcal{L}}{2}\,\mathbb{E}_t[\|\tilde G(\bw_t)\|^2]\nonumber\\
&\le L(\bw_t)-\eta \langle \nabla L(\bw_t),\widehat{G}(\bw_t)\rangle
+\frac{\eta^2 \mathcal{L}}{2}\Big(V^2+\|\widehat{G}(\bw_t)\|^2\Big).
\label{eq:conv_step1}
\end{align}

\paragraph{Step 2: Bounds using the bias lemma.}
By Lemma~\ref{lemma:bias_BW}, $\|\widehat{G}(\bw_t)-\nabla L(\bw_t)\|\le B_W$.
\begin{align}
\langle \nabla L(\bw_t),\widehat{G}(\bw_t)\rangle
&= \|\nabla L(\bw_t)\|^2+\langle \nabla L(\bw_t),\widehat{G}(\bw_t)-\nabla L(\bw_t)\rangle \nonumber\\
&\ge \|\nabla L(\bw_t)\|^2 - \|\nabla L(\bw_t)\|\,\|\widehat{G}(\bw_t)-\nabla L(\bw_t)\| \nonumber\\
&\ge \|\nabla L(\bw_t)\|^2 - B_W\,\|\nabla L(\bw_t)\|,
\label{eq:inner_bound}
\end{align}
and
\begin{align}
\|\widehat{G}(\bw_t)\|^2
&= \|\nabla L(\bw_t) + (\widehat{G}(\bw_t)-\nabla L(\bw_t))\|^2 \nonumber\\
&\le \big(\|\nabla L(\bw_t)\|+\|\widehat{G}(\bw_t)-\nabla L(\bw_t)\|\big)^2 \nonumber\\
&\le \big(\|\nabla L(\bw_t)\|+B_W\big)^2
\;\le\; 2\|\nabla L(\bw_t)\|^2 + 2 B_W^2.
\label{eq:normsq_bound}
\end{align}

\paragraph{Step 3: Substitute \eqref{eq:inner_bound} and \eqref{eq:normsq_bound} into \eqref{eq:conv_step1}.}
We get
\begin{align}
\mathbb{E}_t[L(\bw_{t+1})]
&\le L(\bw_t) - \eta\Big(\|\nabla L(\bw_t)\|^2 - B_W\|\nabla L(\bw_t)\|\Big)
+\frac{\eta^2 \mathcal{L}}{2}\Big(V^2 + 2\|\nabla L(\bw_t)\|^2 + 2 B_W^2\Big)\nonumber\\
&= L(\bw_t) - \eta\|\nabla L(\bw_t)\|^2 + \eta B_W\|\nabla L(\bw_t)\|
+ \eta^2 \mathcal{L} \|\nabla L(\bw_t)\|^2 + \eta^2 \mathcal{L} B_W^2 + \frac{\eta^2 \mathcal{L}}{2} V^2.
\label{eq:after_subs}
\end{align}

\paragraph{Step 4: Rearrangement and Young's inequality.}
Move the gradient-norm terms to the left:
\begin{align}
\eta\|\nabla L(\bw_t)\|^2 - \eta^2 \mathcal{L}\|\nabla L(\bw_t)\|^2
&\le L(\bw_t) - \mathbb{E}_t[L(\bw_{t+1})]
+ \eta B_W\|\nabla L(\bw_t)\|
+ \eta^2 \mathcal{L} B_W^2 + \frac{\eta^2 \mathcal{L}}{2} V^2.\nonumber
\end{align}
Apply Young's inequality to the linear term with $\epsilon=1$:
\begin{equation}
\eta B_W\|\nabla L(\bw_t)\|
\;\le\; \frac{\eta}{2}\|\nabla L(\bw_t)\|^2 + \frac{\eta}{2} B_W^2.
\end{equation}
Thus,
\begin{align}
\big(\eta - \eta^2 \mathcal{L} - \tfrac{\eta}{2}\big)\|\nabla L(\bw_t)\|^2
&\le L(\bw_t) - \mathbb{E}_t[L(\bw_{t+1})]
+ \Big(\tfrac{\eta}{2} + \eta^2 \mathcal{L}\Big) B_W^2 + \frac{\eta^2 \mathcal{L}}{2} V^2.\nonumber
\end{align}
Equivalently,
\begin{equation}
\eta\Big(\tfrac12 - \eta \mathcal{L}\Big)\,\|\nabla L(\bw_t)\|^2
\;\le\;
L(\bw_t) - \mathbb{E}_t[L(\bw_{t+1})]
+ \eta\Big(\tfrac12 + \eta \mathcal{L}\Big) B_W^2
+ \frac{\eta^2 \mathcal{L}}{2} V^2.
\label{eq:key_before_sum}
\end{equation}

\paragraph{Step 5: Telescoping}
If $\eta \le \tfrac{1}{4\mathcal{L}}$, then $\tfrac12 - \eta \mathcal{L} \ge \tfrac14$, so the left side of \eqref{eq:key_before_sum} is at least $\tfrac{\eta}{4}\|\nabla L(\bw_t)\|^2$. Taking full expectation,
\begin{equation}
\frac{\eta}{4}\,\mathbb{E}[\|\nabla L(\bw_t)\|^2]
\le \mathbb{E}[L(\bw_t)] - \mathbb{E}[L(\bw_{t+1})]
+ \eta\Big(\tfrac12 + \eta \mathcal{L}\Big) B_W^2
+ \frac{\eta^2 \mathcal{L}}{2} V^2.
\end{equation}
Summing $t=0$ to $T-1$ and using telescoping plus $L(\bw_T)\ge L_{\min}$,
\begin{equation}
\frac{\eta}{4}\sum_{t=0}^{T-1}\mathbb{E}[\|\nabla L(\bw_t)\|^2]
\le L(\bw_0)-L_{\min} + T\,\eta\Big(\tfrac12 + \eta \mathcal{L}\Big) B_W^2
+ T\,\frac{\eta^2 \mathcal{L}}{2} V^2.
\end{equation}
Divide by $T$ and multiply by $4/\eta$:
\begin{equation}
\frac{1}{T}\sum_{t=0}^{T-1}\mathbb{E}[\|\nabla L(\bw_t)\|^2]
\le \frac{4\,(L(\bw_0)-L_{\min})}{\eta T}
+ (2 + 4\eta \mathcal{L})\,B_W^2
+ 2\eta \mathcal{L}\,V^2.
\end{equation}
Finally substitute $V^2=(\sigma_S^2+\sigma_Q^2)/b$ to obtain
\begin{equation}
\frac{1}{T}\sum_{t=0}^{T-1}\mathbb{E}[\|\nabla L(\bw_t)\|^2]
\;\le\; \frac{4\,(L(\bw_0)-L_{\min})}{\eta T}
+ (2 + 4\eta \mathcal{L})\,B_W^2
+ 2\eta \mathcal{L}\,\frac{\sigma_S^2+\sigma_Q^2}{b},
\end{equation}
which is the claimed bound with $C_B=2+4\eta \mathcal{L}$ and $C_V=2$.
\end{proof}

\section{Experiment setup}
\label{app:experiment_setup}
For CIFAR-10 experiments, we train a Wide ResNet-16-4 model on the CIFAR-10 dataset. A constant learning rate of $1 \times 10^{-4}$ is used, and standard data augmentation is disabled to enhance stability and isolate the quantization effects under investigation. Training proceeds for 20,000 steps, which is sufficient for the training loss to stabilize near zero across most configurations. As a measure of convergence quality near a stationary point, we evaluate the squared $\ell_2$ norm of the full-precision gradient, $|\mathbb{E}[\nabla_\theta L(\theta)]|^2$, computed at the end of each epoch after step 15,000.

For LMSYS-Chat experiments, we fine-tune a Llama-3.2-3B model on the first 1,000 conversations from the LMSYS-Chat dataset, using a next-token prediction objective in SFT format (assistant prompts masked out). We train for 800 steps using the Adam optimizer with a learning rate of $5 \times 10^{-5}$. The training loss reliably stabilizes near zero across configurations. Convergence is assessed via the squared $\ell_2$ norm of the full-precision gradient, computed at the end of epochs after step 400.

For downstream evaluation, we fine-tune Llama-3.2-3B on GSM8K for 100 steps with Adam (learning rate $5 \times 10^{-7}$, context length 512), and for 300 steps with vanilla SGD (learning rate $10^{-3}$). End-of-training accuracy is reported across different batch sizes. Additionally, we fine-tune BERT-base-uncased on the GLUE benchmark, training each task for 1,500 steps with Adam (learning rate $2 \times 10^{-5}$) and batch sizes in ${16, 32, 64, 128, 256}$. Task-specific metrics are used for evaluation: accuracy for most tasks, Matthews correlation for CoLA, and Pearson/Spearman average for STS-B.

Across all experiments, we evaluate multiple quantization formats (\texttt{E4M0}, \texttt{E4M1}, \texttt{E4M2}) applied to activations and gradients under both stochastic rounding (SR) and round-to-nearest (RTN). Training and fine-tuning are implemented in PyTorch 2.7 and executed on 2 NVIDIA H100 GPUs for around 1 week.

\section{Licenses, Models, and Datasets}
\label{sec:licenses_datasets}

This research makes use of publicly available models and datasets. Their sources and licenses are summarized below to ensure transparency and proper attribution.  

\begin{itemize}
    \item \textbf{Llama Model Family:}  
    Llama~3 models are released under the \textbf{Meta Llama~3 Community License Agreement}.  

    \item \textbf{CIFAR-10 Dataset:}  
    Created by Alex Krizhevsky, Vinod Nair, and Geoffrey Hinton; hosted by the University of Toronto (\url{https://www.cs.toronto.edu/~kriz/cifar.html}).  

    \item \textbf{LMSYS-Chat Dataset:}  
    The \texttt{lmsys-chat-1m} dataset is distributed under the \textbf{LMSYS-Chat-1M Dataset License Agreement}.  

    \item \textbf{GSM8K (Grade School Math 8K):}  
    Released by OpenAI and licensed under the \textbf{Apache2 License} (\url{https://huggingface.co/datasets/openai/gsm8k}).  

    \item \textbf{BERT (Bidirectional Encoder Representations from Transformers):}  
Released by Google AI Language and licensed under the \textbf{Apache License 2.0} (\url{https://github.com/google-research/bert}).

    \item \textbf{GLUE Benchmark:}  
    A collection of multiple NLP datasets, distributed under the \textbf{Apache 2.0 License} license (\url{https://www.tensorflow.org/datasets/catalog/glue}).  
\end{itemize}

All models and datasets were used in accordance with their respective licensing terms. All the models and datasets are used according to the intended usage. We thank the creators and maintainers of these resources for their contributions to the research community. 

We did not collect or use any data for this work. Therefore, no personally identifying information (PII) or offensive content is present, and no additional anonymization or protection steps were required. No datasets or artifacts were collected or produced as part of this work, so documentation of domains, languages, linguistic phenomena, or demographic groups is not applicable.

\section{Potential Risk}

The methods presented in this paper aim to enhance the efficiency of training Large Language Models (LLMs), particularly through improved understanding and application of Stochastic Rounding (SR) in mixed-precision settings. By potentially lowering computational and memory barriers, our work could contribute to the democratization of LLM development and fine-tuning, enabling smaller research groups, startups, or even on-device applications. This increased accessibility can foster innovation and allow for more diverse applications tailored to specific needs, potentially benefiting areas like personalized education or assistive technologies.

However, as with any technology that makes powerful AI more accessible, there are dual-use considerations. Lowering the technical threshold for training capable LLMs could inadvertently facilitate their misuse for generating disinformation, spam, or other harmful content if not accompanied by responsible development practices and robust safeguards. Furthermore, while our work focuses on training efficiency, it does not inherently address pre-existing biases within datasets or models. Therefore, the efficient training of LLMs must be paired with ongoing efforts in bias detection, mitigation, and the ethical deployment of these increasingly capable systems. We believe continued research into efficient and responsible AI development is crucial.

\section{Use Of AI Assistants}

In this work, we used AI coding and polishing for writing.

\end{document}